\def\eqref#1{equation~\ref{#1}}
\def\1{\bm{1}}
\DeclareMathAlphabet{\mathsfit}{\encodingdefault}{\sfdefault}{m}{sl}
\SetMathAlphabet{\mathsfit}{bold}{\encodingdefault}{\sfdefault}{bx}{n}
\newcommand{\R}{\mathbb{R}}
\DeclareMathOperator{\Tr}{Tr}
\newtheorem{theorem}{Theorem}[section]
\newtheorem{proposition}[theorem]{Proposition}
\newtheorem{lemma}[theorem]{Lemma}
\newtheorem{definition}[theorem]{Definition}
\newtheorem{assumption}[theorem]{Assumption}
\newtheorem{remark}[theorem]{Remark}
\newenvironment{proofsketch}{%
    \proof}{\endproof}
\newcommand{\Loss}{\mathcal{L}}
\NewDocumentCommand{\W}{m o}{{W^{(#1)}\IfValueTF{#2}{_{#2}}{}}}
\newcommand{\tp}{^\top}
\newcommand{\ZZ}{\mathcal{Z}}
\newcommand{\TT}{T}
\newcommand{\NN}{\mathcal{N}}
\newcommand{\FF}{\mathcal{F}}
\newcommand{\pz}{\mathrm{proj}_\ZZ}
\renewcommand{\epsilon}{\varepsilon}
\newcommand{\proj}{\mathrm{proj}_\ZZ}
\crefname{assumption}{Assumption}{Assumptions}
\title{The Geometry of Grokking: \\ Norm Minimization on the Zero-Loss Manifold}
\author{Tiberiu Musat \\
ETH Zurich \\
\texttt{tiberiu@musat.ai}
}
\begin{document}

\maketitle

\begin{abstract}
    Grokking is a puzzling phenomenon in neural networks where full generalization occurs only after a substantial delay following the complete memorization of the training data. Previous research has linked this delayed generalization to representation learning driven by weight decay, but the precise underlying dynamics remain elusive. In this paper, we argue that post-memorization learning can be understood through the lens of constrained optimization: gradient descent effectively minimizes the weight norm on the zero-loss manifold. We formally prove this in the limit of infinitesimally small learning rates and weight decay coefficients. To further dissect this regime, we introduce an approximation that decouples the learning dynamics of a subset of parameters from the rest of the network. Applying this framework, we derive a closed-form expression for the post-memorization dynamics of the first layer in a two-layer network. Experiments confirm that simulating the training process using our predicted gradients reproduces both the delayed generalization and representation learning characteristic of grokking.
\end{abstract}

\section{Introduction}

Neural networks have achieved great success, but their mechanisms remain far from being fully understood. \citet{doshi2017towards} argue that understanding the inner workings of neural networks is crucial for the development of AI systems with increased safety and reliability. Moreover, understanding the learning dynamics of neural networks could also help us improve their performance and efficiency: it is easier to design better learning algorithms when we understand the limitations of the existing ones. Furthermore, insights into artificial neural networks may also enhance our understanding of biological neural networks due to their fundamental similarities \citep{sucholutsky2023getting,kohoutova2020toward}.

This work aims to clarify the learning dynamics underlying a particularly
puzzling phenomenon termed \emph{grokking}. Under specific training
conditions, neural networks achieve generalization on the test data only after
an extended period following the complete memorization of the training data.
This behavior was first observed in synthetic problems such as modular addition
\citep{power2022grokking}, but was later shown to also happen in real-world
datasets \citep{liu2022omnigrok, humayun2024dnn}.

In the specific problem of modular addition, interpretability research has
revealed that neural networks achieve generalization by placing the embedding
vectors on a circle \citep{gromov2023grokking, zhong2024clock}. The circular
structure of the embedding layer enables the network to perform a symmetric
algorithm that generalizes perfectly to unseen data. It is currently known that
circular representations emerge gradually during the post-memorization phase
\citep{nanda2023progress} and that weight decay is mainly responsible for
driving the delayed generalization \citep{liu2022omnigrok}, but the precise
dynamics remain unclear.

Moreover, the role of the embedding layer in the modular addition task is a striking example that generalization in neural networks often hinges on representation learning within specific network components. In such cases, it can be highly beneficial to simplify
the complex learning dynamics of a deep network by isolating and analyzing only
the component of interest. Recent work has already begun to explore such approximations, also referred to as \textit{effective theories} \citep{van2024representations, liu2022towards, musat2024clustering,
    mehtaneural}.

\section{Our Contributions}

In this work, we aim to answer the following questions:

\begin{enumerate}[label=\textbf{Q\arabic*.},nosep]
    \item What is the exact role of weight decay in the post-memorization learning
          dynamics?
    \item Can we isolate the dynamics of the embedding layer from the rest of the network?
\end{enumerate}

We answer \textbf{Q1} in \cref{sec:weight-decay} by proving that, after
memorization is achieved, the learning dynamics approximately follow the
minimization of the weight norm, constrained to the zero-loss level set.

We answer \textbf{Q2} in \cref{sec:isolating-dynamics} by proposing an
approximation for the isolated learning dynamics of any parameter subset as the
minimization of a specific cost function.

We then combine these insights in \cref{sec:two-layer} to study the
post-memorization learning dynamics of a two-layer network, deriving a
closed-form expression for the cost function of the first layer (the embedding
layer).

Finally, in \cref{sec:experiments}, we validate our theoretical insights on a
modular addition task, showing that our approximations reproduce the delayed
generalization and circular representations characteristic of grokking.

\paragraph{Concurrent Work.} \citet{boursier2026theoretical} offer a very similar answer to \textbf{Q1} by studying regularized gradient flow in the limit of vanishing weight decay. They formally prove that training decomposes into two distinct phases: a fast initial stage where the network converges to the manifold of critical points, and a slow subsequent stage where the parameters drift along this manifold following a Riemannian gradient flow that decreases the $\ell_2$-norm of the weights.

\section{Intuitions from Toy Models}

Before presenting our theoretical findings, we give an intuition for our results by discussing and visualizing how they relate to a few highly simplified models.

\begin{figure}[h]
    \centering
    \includegraphics[width=0.9 \columnwidth]{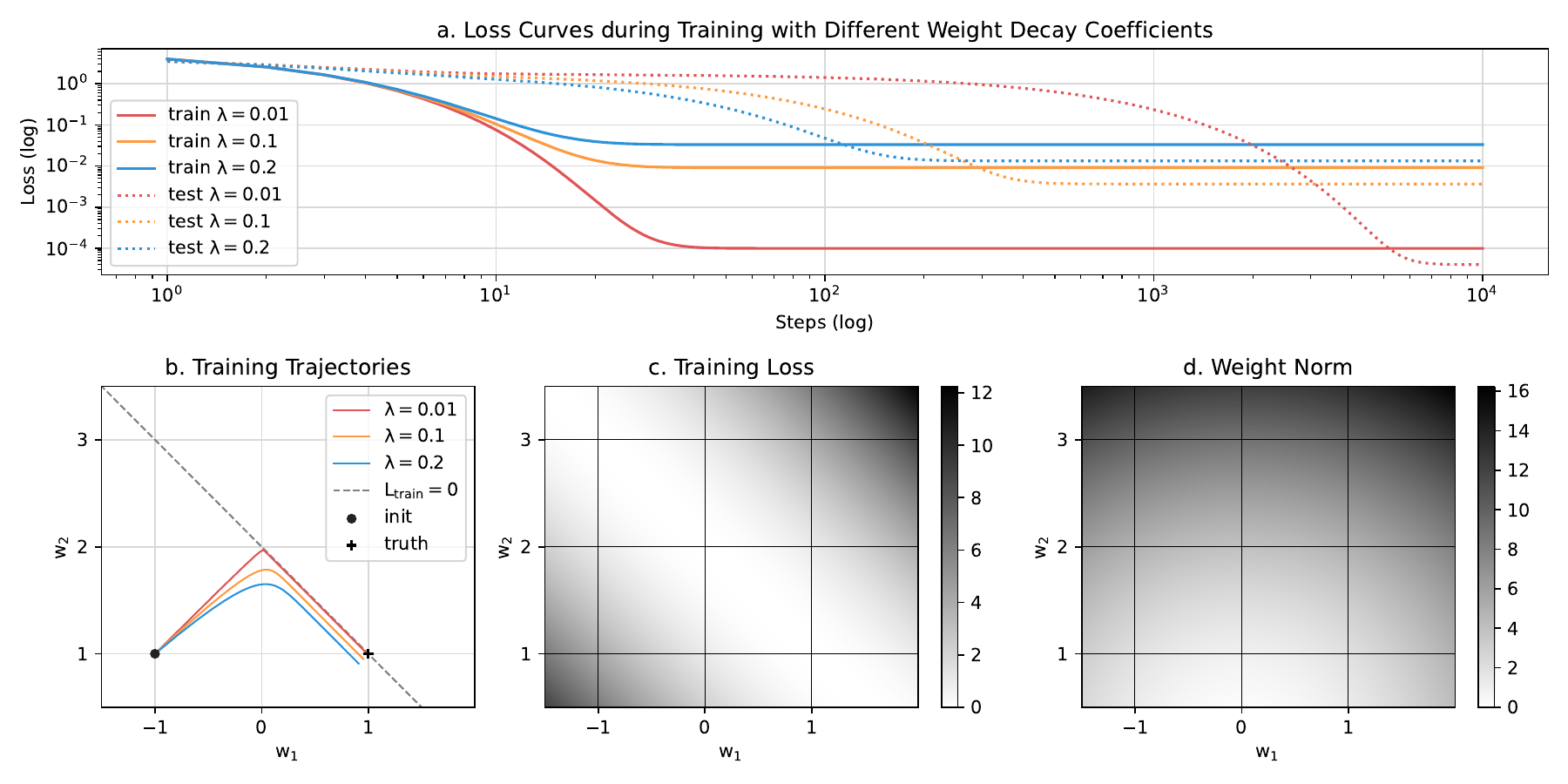}
    \caption{A two-parameter linear model $\hat{y} = w_1x_1 + w_2x_2$ groks simple addition when trained with just one sample: $x_1 = x_2 = 1,\ y=2$ (corresponding to $1 + 1 = 2$). We plot three training runs with different weight decay coefficients $\lambda$. After quickly achieving (almost) zero loss, learning is entirely driven by the minimization of the weight norm.}
    \label{fig:addition}
\end{figure}

\subsection{Grokking Addition}

We begin by discussing how a linear model can grok addition from just $1 + 1 = 2$. We use a single-layer linear model with two inputs and two weights: $\hat{y} = w_1x_1 + w_2x_2$. We train this model with mean-squared error loss using just one sample: $x_1 = x_2 = 1,\ y = 2$. We use three different values of weight decay, $\lambda \in \{ \, 0.01, 0.1, 0.2 \, \}$. We initialize our model with $w_1 = -1$ and $w_2 = 1$.

We aim to show that our model can learn to perform standard addition, despite being trained with a single sample. Test accuracy is measured on a set of 100 randomly generated samples, where $x_1$ and $x_2$ are sampled from a normal distribution, and $y = x_1 + x_2$.

We show our results in \cref{fig:addition}. We can see that our model reproduces grokking: training loss becomes very low after just a 10 steps, while test loss takes a few hundred steps. Additionally, the model achieves lower loss with smaller $\lambda$, but takes longer to generalize.

\paragraph{Interpretation.} From \cref{fig:addition} (a, b), we can see that learning follows two phases. In the first phase, driven by loss minimization, the model achieves a low loss by learning $(w_1, w_2)\approx (0, 2)$. In the second phase, learning is entirely driven by weight decay. The model follows norm minimization, while maintaining (almost) zero loss, eventually reaching $(w_1, w_2)\approx (1, 1)$. Note that, for smaller $\lambda$, the model remains closer to the zero-loss line, but generalization also takes longer.

\subsection{Norm Minimization}

While the previous example illustrates our theoretical framework, it does not capture its full generality. It is unsurprising that applying weight decay encourages a reduction in norm. However, the central claim of this paper is significantly stronger: we argue that the learning dynamics under weight decay do not merely follow \emph{some} norm-decreasing direction, but rather evolve along the direction \emph{that maximally decreases the norm, subject to remaining on the zero-loss manifold}. Another way to view this is the following: once the model achieves perfect memorization, learning effectively follows gradient descent on the weight norm, constrained to the zero-loss manifold.

To offer a better intuition, we show how a linear model can grok three-number addition. We train a three-parameter linear model $\hat{y} \,=\, w_1 x_1 + w_2 x_2 + w_3 x_3$ with just one sample: $x_1 = x_2 = x_3 = 1,\ y = 3$. As in the previous section, this model exhibits grokking: after quickly achieving zero training loss, the model slowly reaches the generalizing solution $(w_1, w_2, w_3) \approx (1,1,1)$. We perform four training runs with different initializations and visualize the resulting trajectories in \cref{fig:addition_3d}. We observe that, for all initializations, the model first converges to the zero-loss plane, then moves directly towards the solution of minimum norm.

\begin{figure}[ht]
    \centering
    \includegraphics[width=0.9 \columnwidth]{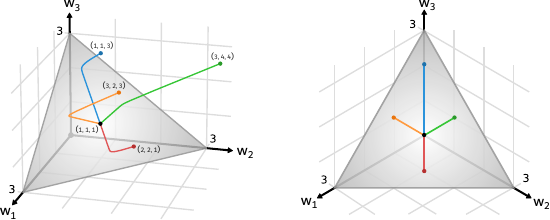}
    \caption{A three-parameter linear model $\hat{y} = w_1 x_1 + w_2 x_2 + w_3 x_3$ groks three-number addition when trained with just one sample: $x_1=x_2=x_3=1,\ y=3$ (corresponding to $1 + 1 + 1= 3$). The gray area shows the zero-loss plane, shaded according to the weight norm, where a lighter shade denotes a lower norm.}
    \label{fig:addition_3d}
\end{figure}

\subsection{A Few Mathematical Nuances}

So far, our examples have shown only flat zero-loss subspaces, but this is not necessarily the case. The zero-loss subspace can more generally be thought of as a manifold: a subspace that locally resembles Euclidean space near each point. For example, we show a curved zero-loss set in \cref{fig:manifolds} (left), along with a few training trajectories.

An important caveat is that the zero-loss set is not necessarily a manifold everywhere: it might contain singular points. Such a singular point is demonstrated in \cref{fig:manifolds} (center). However, such singularities should not worry us too much. As we prove in \cref{thm:z-regularity}, if the network is realized by a smooth function, we will \textit{almost} never encounter a singularity during standard training.

Another nuance is that, in practice, neural networks are trained using the ReLU activation function, which is not smooth. This will partition the loss into a finite set of smooth regions with nonsmooth boundaries. The nonsmooth points will also form a null set. We visualize a scenario of this type in \cref{fig:manifolds} (right) with leaky ReLU activation: $\mathrm{ReLU}(x) = x \mathrm{\;if\;} x>0 \mathrm{\;else\;} {x/10}.$

\begin{figure}[]
    \centering
    \includegraphics[width=0.9 \columnwidth]{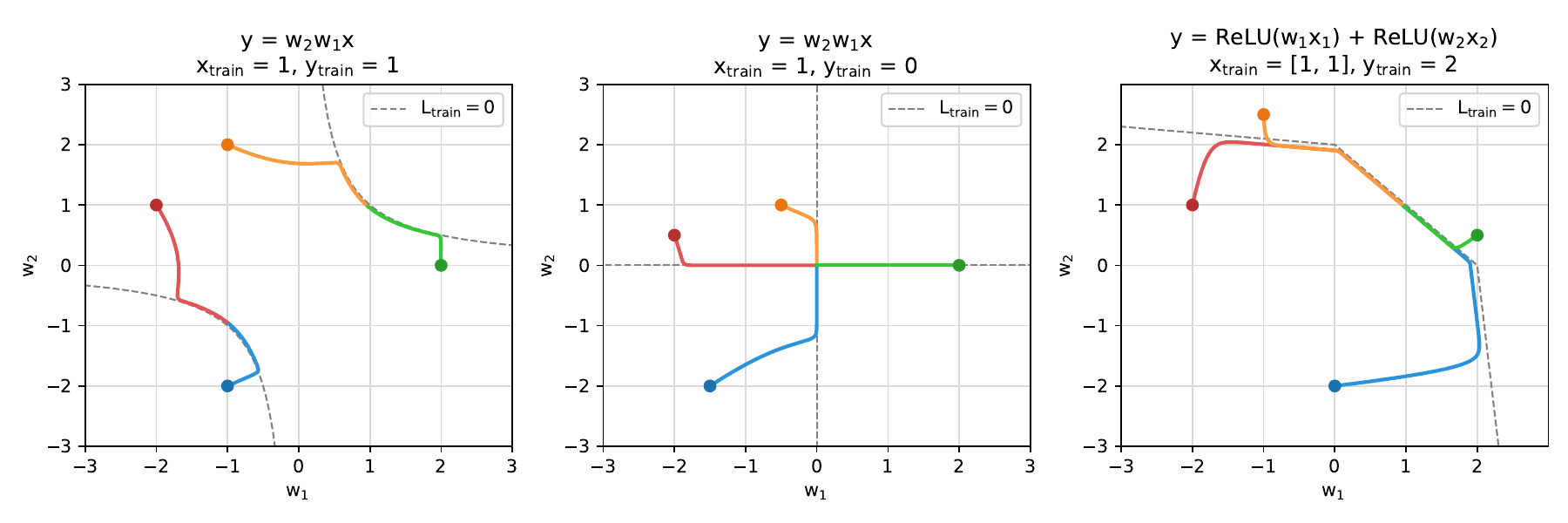}
    \caption{Training trajectories with different data, architectures and initializations. \emph{Left:} a two-layer linear network where the zero-loss set is curved.
    \emph{Center:} a two-layer linear network where the zero-loss set has a singularity at $(w_1, w_2) = (0, 0)$. \emph{Right:} a single-layer network with leaky ReLU activation groks simple addition.   }
    \label{fig:manifolds}
\end{figure}

\subsection{Gradient Orthogonality}
\label{sec:theorem-intuition}

We further illustrate our key theoretical result using a toy loss landscape. Consider a model with only two parameters $x, y \in \R$ and a loss function $\Loss(x, y) = (y - x^2)^2$. We visualize this loss landscape in \cref{fig:ortho-viz}. We plot three points $A, B, C\in\R^2$ that have the same projection on the zero-loss manifold, but are getting progressively closer. As we prove in \cref{thm:weight-norm}, the loss gradients become perfectly orthogonal to the zero-loss set as we approach it.

\begin{figure}[ht]
    \begin{align*}
        \includegraphics[width=0.9\columnwidth]{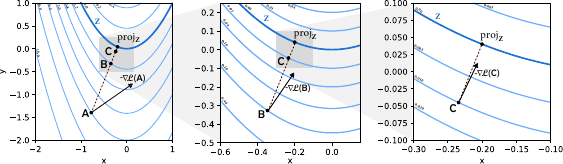}
    \end{align*}
    \caption{We illustrate \cref{thm:weight-norm} using a toy loss landscape $\Loss(x, y) = (y - x^2)^2$. We plot the level sets of $\Loss$ from three different views. Each view shows a magnified region of the previous view, zooming into the zero-loss manifold. We display the gradient angle at three different points, pointed in the negative direction. The gradient norm is not shown to scale. Gradients become increasingly orthogonal to the zero-loss manifold as we get closer.}
    \label{fig:ortho-viz}
\end{figure}

\section{Post-Memorization Dynamics}
\label{sec:weight-decay}

\subsection{Architecture}

We consider a neural network trained with mean-squared error loss on $k$ samples and weight decay:
\begin{align}
    \Loss(\theta) = \sum_{i=1}^k \| f(\theta, x_i) - y_i \|^2
    &&
    \mathcal{L}_{\lambda}(\theta) = \mathcal{L}(\theta) + \lambda \|\theta\|^2
\end{align}
where $x_i \in \R^n$, $y_i \in \R^m$, and $\theta \in \R^d$ is the parameter vector. The network has $d$ parameters, $n$ inputs, and $m$ outputs. We use $f : \R^{d \times n} \rightarrow \R^m$ to denote the network realization function. We apply a weight decay term \citep{krogh1991simple} with a coefficient $\lambda > 0$.

We also denote the concatenated outputs for all training samples using $\mathcal{F} : \R^d \rightarrow \R^{km}$, where
\begin{equation}
    \mathcal{F}(\theta) = \Big[ \; f(\theta, x_1)\tp, \; f(\theta, x_2)\tp, \;\ldots, f(\theta, x_n)\tp \; \Big]\tp.
\end{equation}

\subsection{Theoretical Setup}

We theoretically study the training dynamics under the following assumptions:
\begin{assumption}[Over-Parametrization]
    We assume that $d \ge km$ in order for the model to be able to memorize the entire dataset without learning any representations.
\end{assumption}

\begin{assumption}[Smooth Network]
\label{aspt:smooth-network}
    We assume that the network realization function $f$ is \emph{continuously differentiable} $d - km+1$ times.
\end{assumption}

\begin{assumption}[Gradient Flow]
    We model the gradient descent trajectory as a gradient flow. We consider the parameter vector as a continuous function of time $\theta: \R_{\ge0} \rightarrow \R^d$ with dynamics:
    \begin{equation}
        \label{eq:gradient-flow}
        \frac{\partial \theta(t)}{\partial t} = -\nabla \mathcal{L}_{\lambda}(\theta(t)).
    \end{equation}
\end{assumption}

\begin{assumption}[Perfect Memorization]
    We study the training dynamics after perfect memorization is achieved. This requires that the zero-loss set is not empty, i.e. $\mathcal{Z} \ne \varnothing$.
\end{assumption}

\begin{assumption}[Vanishing Weight Decay]
    We study the learning dynamics in the approximation of a very small weight decay coefficient, i.e. $\lambda \rightarrow 0$, motivated by the small values of $\lambda$ typically used by practitioners \citep{smith2018disciplined}, as well as by previous empirical works on grokking \citep{liu2022omnigrok} suggesting that a small $\lambda$ is essential for the delayed-generalization phenomona.
\end{assumption}


\subsection{Constrained to the Zero-Loss Set}

We begin by establishing that the model remains constrained arbitrarily close to the zero-loss set after reaching a memorizing solution.

\begin{definition}[Zero-Loss Set]
    Let $\mathcal{Z} = \{ \, \theta \in \mathbb{R}^d \,|\, \mathcal{L}(\theta) = 0 \, \}$ denote the zero-loss set.
\end{definition}
\begin{definition} [Distance]
    Let $\mathrm{dist}_\mathcal{Z}(\theta) = \inf_{\phi \in \mathcal{Z}} \|\theta - \phi\|$ be the distance from $\theta \in \mathbb{R}^d $ to $\mathcal{Z}$.
\end{definition}

\begin{restatable}[Stability of $\ZZ$]{theorem}{zstability}
    \label{thm:z-stability}
    For every trajectory starting at a zero-loss solution $\theta(0) \in \mathcal{Z}$ and every $\epsilon > 0$, there exists $\lambda_\epsilon > 0$ such that for all $0 < \lambda < \lambda_\epsilon$ the trajectory under $\Loss_\lambda$ satisfies
    \begin{equation}
        \sup_{t \ge0} \mathrm{dist}_\ZZ(\theta(t)) < \epsilon.
    \end{equation}
\end{restatable}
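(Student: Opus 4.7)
The plan is a Lyapunov argument for the full regularized loss $\Loss_\lambda$, combined with a compactness/continuity step that converts ``small $\Loss$'' into ``close to $\ZZ$''. The central realization is that although one usually takes $\Loss$ as the Lyapunov function, here the correct choice is $\Loss_\lambda$: the initial condition $\theta(0)\in\ZZ$ forces $\Loss_\lambda(\theta(0))=\lambda\|\theta(0)\|^2$ to be of order $\lambda$ (rather than of order $1$), and this smallness propagates along the flow.

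\textbf{Step 1 (monotonicity and a priori bounds).} Along the gradient flow, $\tfrac{d}{dt}\Loss_\lambda(\theta(t)) = -\|\nabla\Loss_\lambda(\theta(t))\|^2\le 0$, so
\[
\Loss(\theta(t)) + \lambda\|\theta(t)\|^2 \;\le\; \Loss_\lambda(\theta(0)) \;=\; \lambda R^2, \qquad R:=\|\theta(0)\|.
\]
This yields simultaneously $\|\theta(t)\|\le R$ and $\Loss(\theta(t))\le \lambda R^2$. In particular the trajectory stays inside the compact ball $K:=\overline{B}(0,R)$, which together with smoothness of $\mathcal{F}$ guarantees global existence of the flow for all $t\ge 0$.

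\textbf{Step 2 (small loss $\Rightarrow$ small distance, on $K$).} I would then establish: for every $\epsilon>0$ there exists $\delta(\epsilon)>0$ such that
\[
\big\{\theta\in K:\Loss(\theta)\le \delta(\epsilon)\big\}\;\subseteq\;\big\{\theta\in\R^d:\mathrm{dist}_\ZZ(\theta)<\epsilon\big\}.
\]
This is a compactness argument by contradiction: if it failed, one could extract a sequence $\theta_n\in K$ with $\Loss(\theta_n)\to 0$ and $\mathrm{dist}_\ZZ(\theta_n)\ge\epsilon$; by Bolzano--Weierstrass a subsequence converges to some $\theta^\star\in K$, continuity of $\Loss$ forces $\Loss(\theta^\star)=0$, hence $\theta^\star\in\ZZ$; but then $\mathrm{dist}_\ZZ(\theta_n)\le\|\theta_n-\theta^\star\|\to 0$, contradicting the lower bound.

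\textbf{Step 3 (choice of $\lambda_\epsilon$ and conclusion).} Given $\epsilon>0$, pick $\delta(\epsilon)$ from Step 2 and set $\lambda_\epsilon:=\delta(\epsilon)/(R^2+1)$. For any $0<\lambda<\lambda_\epsilon$, Step 1 gives $\Loss(\theta(t))\le \lambda R^2<\delta(\epsilon)$ for all $t\ge 0$, and Step 2 converts this pointwise into $\mathrm{dist}_\ZZ(\theta(t))<\epsilon$. Taking the supremum in $t$ yields the claimed uniform bound. The degenerate case $R=0$ is trivial, since then $\theta(t)\equiv 0\in\ZZ$.

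\textbf{Main obstacle.} No individual step is technically hard; the content is conceptual, namely noticing that $\Loss_\lambda$ (not $\Loss$) is the right Lyapunov function and that the initial condition in $\ZZ$ immediately cashes out into an $O(\lambda)$ bound on both $\Loss$ and $\|\theta\|^2-R^2$ along the whole trajectory. One minor technical caveat to confirm is that the compactness step really only requires continuity of $\Loss$ on $K$ and non-emptiness of $K\cap\ZZ$ (provided by $\theta(0)\in\ZZ$); neither non-singularity nor over-parametrization is used here --- those hypotheses become essential only for the sharper statement that the flow tracks a constrained norm-minimization trajectory, which the paper develops afterwards.
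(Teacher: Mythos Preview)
Your proof is correct and follows essentially the same route as the paper: use $\Loss_\lambda$ as a Lyapunov function to obtain both $\|\theta(t)\|\le R$ and $\Loss(\theta(t))\le\lambda R^2$, then invoke compactness on the closed ball to convert ``small loss'' into ``small distance to $\ZZ$''. The only cosmetic difference is that the paper phrases the compactness step as ``$\min_\Phi\Loss>0$ on the compact set $\Phi=\{\mathrm{dist}_\ZZ\ge\epsilon\}\cap\overline{B}(0,R)$'' rather than via a Bolzano--Weierstrass contradiction, but these are equivalent formulations of the same argument.
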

\begin{proofsketch}
    Our proof is based on the fact that the gradient flow will never increase the optimized quantity $\mathcal{L}_{\lambda}(\theta) = \mathcal{L}(\theta) + \lambda \|\theta\|^2$. Since both terms are non-negative, we can establish any desired bound on $\mathcal{L}(\theta)$ by an appropriate choice of $\lambda$. We then use this to obtain the bound on the distance. We give the full proof in \cref{app:z-stability}.
\end{proofsketch}

\subsection{Regularity of the Zero-Loss Set}

We show that the zero-loss set is well-behaved for \textit{almost} every dataset.

\begin{definition}[Singular Points]
    We say that $\theta \in \R^d$ is a singular point if the Jacobian matrix of $\mathcal{F}$ at $\theta$ is not full rank, i.e. $\mathrm{rank}(\nabla \mathcal{F} (\theta)) < \min(d, km)$. We denote the set of all singular points as $\mathrm{Crit}(\mathcal{F}) \subset \R^{km}$. Note that singular points are defined in terms of $\mathcal{F}$, though they correspond to singular points of $\mathcal{Z}$. With respect to $\mathcal{F}$, they are more precisely \emph{critical points}.
\end{definition}

\begin{restatable}[Regularity of $\ZZ$]{theorem}{zregularity}
    \label{thm:z-regularity}
    For \emph{almost} every dataset $(x_i, y_i)_{i\le k}$, the corresponding zero-loss set $\ZZ$ will not contain any singular points.
\end{restatable}
\begin{proofsketch}
    We show that for any set of input vectors $(x_i)_{i\le k}$, \textit{almost} every possible set of target vectors $(y_i)_{i\le k}$ induces a zero-loss set $\ZZ$ that is completely free of singular points.
    
    Using the assumption that $\mathcal{F}$ is smooth, our desired result follows almost immediately from \citet{sard1942measure}, also known as the \emph{Morse–Sard theorem}, which states that the image of the critical points has Lebesgue measure zero. In our case, the image of the singular points $\mathcal{F}\big(\mathrm{Crit}(\mathcal{F})\big)$ has Lebesgue measure zero in the space of possible outputs $\R^{km}$. In other words, only a negligible set of possible outputs is ever hit by a singular point. We give a detailed proof in \cref{app:no-singularities}.
\end{proofsketch}

\subsection{Loss Gradient Orthogonality}

We will now provide our main theoretical result, which states that $\nabla\mathcal{L}(\theta)$ around $\ZZ$ becomes orthogonal to any tangent direction.  We illustrate this concept in \cref{fig:ortho-viz}.

\begin{definition} [Tangent Direction]
    \label{def:zero-loss-direction}
    We say that $v \in \mathbb{R}^d$ is a tangent direction at $\theta \in \mathcal{Z}$ if there exists a smooth trajectory $s : \mathbb{R} \rightarrow \mathbb{R}^d$ such that $s(0) = \theta$, $s'(0) = v$, and $\mathcal{L}(s(t)) = 0$ for all $t \in \mathbb{R}$.
\end{definition}

\begin{definition} [Tangent Space]
    We denote by $\TT_\theta$ the set of all tangent directions at $\theta \in \mathcal{Z}$.
\end{definition}

\begin{definition}[Projection]
\label{def:projection}
    Let $\mathrm{proj}_\ZZ(\theta) = \arg \inf_{\theta' \in \mathcal{Z}} \|\theta - \theta'\|$ be the projection of $\theta$ onto $\ZZ$.
\end{definition}

\begin{restatable}[Gradient Orthogonality]{theorem}{gradortho}
    \label{thm:weight-norm}
    Let $S \subset \mathbb{R}^d$ be a compact space with $\mathrm{proj}_\ZZ(S) \subseteq S$. If $S \cap \ZZ$ contains no singular points, then there exists a constant $C > 0$ such that
    \begin{equation}
        \label{eq:weight-norm}
        \bigg|\cos\Big(\angle\big(v, \nabla\Loss(\theta)\big)\Big)\bigg| \;=\;
        \left|
            \frac{\;\;v^\top}{\| v \|} \,
            \frac{\nabla\mathcal{L}(\theta)}{\| \nabla\mathcal{L}(\theta)\|}
        \right| \;<\; C \,\mathrm{dist}_{\ZZ}(\theta)
    \end{equation}
    holds for all $\theta \in S \setminus \ZZ$ and all tangent directions $v \in \TT_{\mathrm{proj}_\ZZ(\theta)}$.
\end{restatable}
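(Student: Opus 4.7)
Writing the loss in the factored form $\Loss(\theta)=\|\mathcal{F}(\theta)-Y\|^{2}$ with $Y=[y_{1}\tp,\ldots,y_{k}\tp]\tp$, the chain rule gives
\begin{equation}
\nabla\Loss(\theta) \;=\; 2\,J(\theta)\tp r(\theta),\qquad J(\theta):=\nabla\mathcal{F}(\theta),\qquad r(\theta):=\mathcal{F}(\theta)-Y.
\end{equation}
The plan is to bound $|v\tp\nabla\Loss(\theta)|$ from above by exploiting the vanishing of $J(\theta^{\ast})v$ at the projected point $\theta^{\ast}:=\proj(\theta)\in\ZZ$, and to bound $\|\nabla\Loss(\theta)\|$ from below by the uniform nondegeneracy of $J$ on $S$, so that the residual $\|r(\theta)\|$ cancels between numerator and denominator.

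\textbf{Upper bound on the numerator.} Given $v\in\mathcal{T}_{\theta^{\ast}}$, \cref{def:zero-loss-direction} furnishes a smooth curve $s$ in $\ZZ$ with $s(0)=\theta^{\ast}$ and $s'(0)=v$; differentiating $\mathcal{F}(s(t))=Y$ at $t=0$ yields $J(\theta^{\ast})v=0$. Hence
\begin{equation}
v\tp\nabla\Loss(\theta) \;=\; 2\,(J(\theta)\,v)\tp r(\theta) \;=\; 2\,\bigl((J(\theta)-J(\theta^{\ast}))\,v\bigr)\tp r(\theta).
\end{equation}
Since $\mathcal{F}$ is smooth and $S$ is compact, $J$ is $L$-Lipschitz on $S$ for some $L$; and since $\|\theta-\theta^{\ast}\|=\mathrm{dist}_{\ZZ}(\theta)$, Cauchy--Schwarz then gives $|v\tp\nabla\Loss(\theta)|\le 2L\|v\|\,\mathrm{dist}_{\ZZ}(\theta)\,\|r(\theta)\|$.

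\textbf{Lower bound on the denominator.} The no-singular-points hypothesis implies $J(\theta)$ has full row rank $km\le d$ at every $\theta\in S$, so $\sigma_{\min}(J(\theta))$ is continuous and strictly positive on the compact set $S$ and hence uniformly bounded below by some $\sigma_{0}>0$. For any $u\in\R^{km}$ this yields $\|J(\theta)\tp u\|\ge\sigma_{0}\|u\|$, so $\|\nabla\Loss(\theta)\|\ge 2\sigma_{0}\|r(\theta)\|$. Dividing the two inequalities cancels the factor $\|r(\theta)\|$, which is nonzero since $\theta\notin\ZZ$, and yields the claim with $C=L/\sigma_{0}$.

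\textbf{Main obstacle.} The delicate step is making $\theta^{\ast}=\proj(\theta)$ well-defined, unique, and inside $S$, so that the Lipschitz bound and singular-value lower bound available on $S$ can be invoked at $\theta^{\ast}$ too. Around any non-singular point the constant-rank theorem makes $\ZZ$ a smooth embedded submanifold of codimension $km$, and a tubular-neighborhood argument (combined with the compactness of $S$ and the stated hypothesis on $\proj$) provides a uniform radius $x_{0}>0$ such that $\proj$ is well-defined on $\{\theta\in S:\mathrm{dist}_{\ZZ}(\theta)<x_{0}\}$ and sends each such $\theta$ back into $S$. Once this bookkeeping is in place, the three preceding bounds combine immediately into the stated inequality; the rest is routine first-order Taylor analysis.
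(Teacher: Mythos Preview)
Your argument is correct and takes a genuinely different route from the paper. The paper works at the level of the \emph{Hessian} of $\Loss$: it Taylor-expands $\nabla\Loss$ around $\phi=\proj(\theta)$ as $\nabla\Loss(\phi+x\alpha)=x\,\nabla^{2}\Loss(\phi)\,\alpha+x\,h_{\phi,\alpha}(x)$, proves a separate lemma identifying $\mathcal{T}_{\phi}$ with the null space of $\nabla^{2}\Loss(\phi)$ so that $v\tp\nabla^{2}\Loss(\phi)\alpha=0$, and then has to bound the remainder $h$ uniformly from above and the quantity $\|\nabla^{2}\Loss(\phi)\,\alpha\|$ uniformly from below over all $\phi\in S\cap\ZZ$ and unit normal $\alpha$. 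You instead exploit the factored structure $\nabla\Loss=2J\tp r$ directly: the tangent condition $J(\theta^{\ast})v=0$ follows in one line from differentiating $\mathcal{F}(s(t))=Y$, the numerator picks up a single Lipschitz factor $\|J(\theta)-J(\theta^{\ast})\|\le L\,\mathrm{dist}_{\ZZ}(\theta)$, and the residual $\|r(\theta)\|$ cancels exactly against the denominator bound $\|J\tp r\|\ge\sigma_{0}\|r\|$. This buys you a cleaner, first-order argument with no Taylor remainder bookkeeping and no separate characterization of the normal space; the paper's approach, by contrast, has to manage a family of remainder constants $M_{\phi,\alpha},a_{\phi,\alpha}$ indexed by $(\phi,\alpha)$ and argue (somewhat loosely) that their sup and inf are finite and positive. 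Your handling of the projection issue in the final paragraph is also appropriate: the paper's stated hypothesis $\proj(S\cap\ZZ)\subseteq S$ is trivially satisfied and almost certainly a typo for $\proj(S)\subseteq S$, and the tubular-neighborhood argument you sketch is the right way to supply the missing $x_{0}$.
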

\begin{proofsketch}
    We approximate the loss gradient at $\theta$ using the Taylor expansion around $\proj(\theta)$ to obtain $\nabla \mathcal{L}(\theta) = Hx + O(\| x \|^2)$, where
    $x = \theta - \pz(\theta)$ and $H = \nabla^2 \mathcal{L}(\pz(\theta))$.
    
    Using $v \in \TT_{\pz(\theta)}$ and the absence of singular points, we are able to show that $Hv = 0$ and $\|\nabla \mathcal{L}(\theta)\| = \Theta(\|x\|)$. Therefore, the normalized dot product will be $O(\| x \|)$. We give the full proof in \cref{app:gradient-orthogonality}.
\end{proofsketch}

\begin{remark}
In other words, the loss gradient does not induce any movement near $\ZZ$. After a memorizing solution is reached, learning will be driven entirely by weight decay. The loss will only serve to keep the model near $\ZZ$, while weight decay will be free to push the model towards norm minimization along any of the tangent directions.
\end{remark}

\subsection{Empirical Validation}
\label{sec:real-dynamics}

\begin{figure}[ht]
    \begin{align*}
        \includegraphics[width=0.99\columnwidth]{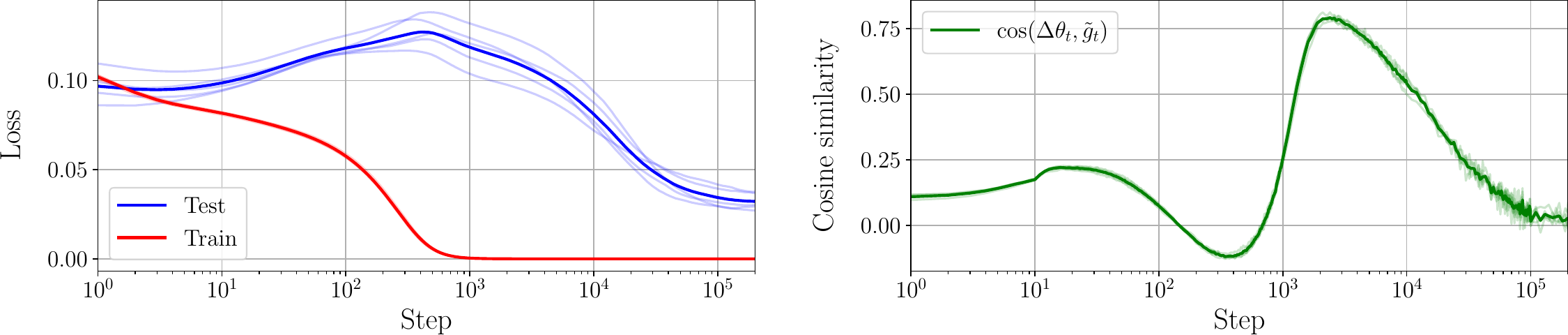}
    \end{align*}
    \caption{We train smalls networks to perform modular addition. We display their average train and test loss (left) and the average cosine similarity between the parameter updates and the norm-minimizing direction on the zero-loss set (right). As we can see from the loss plot (left), the network exhibits grokking. Interestingly, the similarity is greatest exactly during the \emph{grokking} stage. }
    \label{fig:real-dynamics}
\end{figure}

We empirically validate our theory that post-memorization dynamics follows the norm-minimization direction on the zero-loss manifold. We train a few small two-layer networks to perform modular addition with ReLU activation, mean squared-error loss, and weight decay. At every step $t$ during training, we measure the cosine similarity between the parameter update $\Delta\theta_t$ and an estimate of the norm-minimizing direction on the zero-loss set $\tilde{g}_t = \arg \min_v (v\tp\theta_t)$ such that $v \in \TT_{\proj (\theta_t)}$. We display the results in \cref{fig:real-dynamics}. We explain the full experimental details in \cref{app:real-dynamics}.

\section{Isolated Dynamics of a Network Component}
\label{sec:isolating-dynamics}

The parameter vector can be decomposed into two orthogonal parameter subsets $\theta = [\theta_1, \theta_2]$, where $\theta_1 \in \mathbb{R}^{d_1}$, $\theta_2 \in \mathbb{R}^{d_2}$, and $d_1 + d_2 = d$. We are interested in the learning dynamics of $\theta_1$:
\begin{equation}
    \label{eq:isolated-dynamics}
    \dot{\theta}_1 = -\nabla_{\theta_1} \mathcal{L}_{\lambda}(\theta_1, \theta_2)
\end{equation}

The gradient flow assumption means that the trajectory of $\theta_1$ is a one-dimensional curve in a $d_1$-dimensional space. This suggests that it is highly unlikely that our trajectory will pass through the same $\theta_1$ twice. If we assume that a training trajectory only goes through unique values of $\theta_1$, then it is possible to parametrize $\theta_2$ as a function of $\theta_1$:
\begin{equation}
    \label{eq:theta2}
    \theta_2 = \phi(\theta_1)
\end{equation}

where $\phi : \mathbb{R}^{d_1} \rightarrow \mathbb{R}^{d_2}$ is a function
specific to the loss function and the initial parameters.

We motivate this assumption based on the comprehensive literature on multiple points of stochastic processes. For example, even with a dimensionality as small as $d\ge4$, it is known that a Brownian motion in $\R^d$ contains no self-intersections \emph{almost surely} (\citet{morters2010brownian}, Chapter 9). More generally, \citet{dalang2021multiple} prove the non-existance of multiple points for a wide class of Gaussian random fields. 

This parametrization allows us to isolate the dynamics of $\theta_1$ along the training trajectory by expressing them as a function of $\theta_1$ alone:
\begin{equation}
    \label{eq:isolated-dynamics-phi}
    \dot{\theta}_1 = -\nabla_{\theta_1} \mathcal{L}_{\lambda}(\theta_1, \phi(\theta_1))
\end{equation}

While the function $\phi$ is generally intractable, working with reasonable
approximations can provide valuable insights into the learning dynamics of
$\theta_1$.

\subsection{Approximate Cost Function}
\label{sec:approx-cost}

We propose approximating $\phi$ by assuming that parameters $\theta_2$ are
optimal for the current value of $\theta_1$:
\begin{equation}
    \label{eq:phi-approx}
    \phi(\theta_1) = \arg \min_{\theta_2} \mathcal{L}_{\lambda}(\theta_1, \theta_2)
\end{equation}

This approximation can also be understood as treating $\theta_1$ as the slow
learning component, while $\theta_2$ is the fast learning component that
quickly adapts to the current value of $\theta_1$.

Additionally, optimizing $\theta$ under this approximation is equivalent to
optimizing the following cost function:
\begin{equation}
    \label{eq:isolated-cost}
    \mathcal{R}(\theta_1) = \min_{\theta_2} \mathcal{L}_{\lambda}(\theta_1, \theta_2)
\end{equation}
\begin{theorem}
    \label{thm:isolated-gradient}
    The learning dynamics of $\theta_1$ under \cref{eq:isolated-dynamics-phi,eq:phi-approx}
    follow the gradient flow of $\mathcal{R}$ when $\phi$ is differentiable:
    \begin{equation}
        \label{eq:isolated-gradient}
        \dot{\theta}_1 = -\nabla_{\theta_1} \mathcal{R}(\theta_1)
    \end{equation}
\end{theorem}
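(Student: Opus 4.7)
The plan is to view this statement as a direct application of the envelope theorem (Danskin's theorem in its smooth form), and to prove it by a one-line chain-rule computation. First I would write $\mathcal{R}$ explicitly as the composition
\[
\mathcal{R}(\theta_1) \;=\; \mathcal{L}_\lambda(\theta_1, \phi(\theta_1)),
\]
which holds by the definitions of $\mathcal{R}$ and $\phi$ in \cref{eq:isolated-cost,eq:phi-approx}. Under the hypothesis that $\phi$ is differentiable and the smoothness of $\mathcal{L}_\lambda$ (inherited from the smoothness of $\mathcal{F}$), the chain rule then gives
\[
\nabla_{\theta_1}\mathcal{R}(\theta_1) \;=\; \nabla_{\theta_1}\mathcal{L}_\lambda(\theta_1,\phi(\theta_1)) \;+\; \bigl[\nabla\phi(\theta_1)\bigr]^\top \nabla_{\theta_2}\mathcal{L}_\lambda(\theta_1,\phi(\theta_1)).
\]

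Next, I would eliminate the second term using the first-order optimality condition for $\phi$. Since $\phi(\theta_1)$ is defined as an unconstrained minimizer of $\theta_2 \mapsto \mathcal{L}_\lambda(\theta_1,\theta_2)$ over $\mathbb{R}^{d_2}$, the inner gradient must vanish at the minimizer, i.e. $\nabla_{\theta_2}\mathcal{L}_\lambda(\theta_1,\phi(\theta_1)) = 0$. Substituting this into the chain-rule identity collapses it to $\nabla_{\theta_1}\mathcal{R}(\theta_1) = \nabla_{\theta_1}\mathcal{L}_\lambda(\theta_1,\phi(\theta_1))$. Combining with the assumed dynamics \cref{eq:isolated-dynamics-phi} then yields $\dot\theta_1 = -\nabla_{\theta_1}\mathcal{R}(\theta_1)$, which is the desired identity.

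The main obstacle, conceptually, is that the chain rule step requires $\phi$ to be differentiable, which is precisely what the theorem assumes, but this hypothesis is nontrivial: in general $\phi$ may be multi-valued or non-smooth. Typically one would justify differentiability by an implicit-function argument applied to the first-order condition $\nabla_{\theta_2}\mathcal{L}_\lambda(\theta_1,\theta_2)=0$, which requires the Hessian block $\nabla_{\theta_2}^2 \mathcal{L}_\lambda$ to be invertible at $\phi(\theta_1)$. I would note this subtlety in a remark, but since the theorem treats differentiability of $\phi$ as an assumption, the proof itself reduces to the two-line envelope-theorem computation above.
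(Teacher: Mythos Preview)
Your proposal is correct and is essentially identical to the paper's own proof: both write $\mathcal{R}(\theta_1)=\mathcal{L}_\lambda(\theta_1,\phi(\theta_1))$, apply the chain rule, and kill the cross term using the first-order optimality condition $\nabla_{\theta_2}\mathcal{L}_\lambda(\theta_1,\phi(\theta_1))=0$. Your additional framing via the envelope/Danskin theorem and the remark on why differentiability of $\phi$ is nontrivial are nice contextual additions, but the argument itself matches the paper's two-line proof.
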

\begin{proof}
    Note that $\mathcal{R}(\theta_1) =
        \mathcal{L}_{\lambda}(\theta_1, \phi(\theta_1))$. By differentiating it
    with respect to $\theta_1$ we obtain that
    $\nabla_{\theta_1} \mathcal{R}(\theta_1) =
        \nabla_{\theta_1} \mathcal{L}_{\lambda}(\theta_1, \phi(\theta_1)) + \nabla_{\theta_2} \mathcal{L}_{\lambda}(\theta_1, \phi(\theta_1)) \, \nabla_{\theta_1} \phi(\theta_1)$.
    However, since $\phi(\theta_1)$ is a minimum of $\mathcal{L}_{\lambda}$, we have that $\nabla_{\theta_2} \mathcal{L}_{\lambda}(\theta_1, \phi(\theta_1)) = 0$,
    giving us the desired result.
\end{proof}

\section{Two-Layer Networks}
\label{sec:two-layer}

\subsection{Setup}

We turn our attention to the learning dynamics of a two-layer neural network
trained with mean squared error loss and weight decay:
\begin{equation}
    \label{eq:two-layer-loss}
    \mathcal{L} = \| \, \sigma(X W_1) \, W_2 - Y \, \|_F^2
\end{equation}
where $X \in \mathbb{R}^{n \times d_{in}}$ is the input data, $Y \in
    \mathbb{R}^{n \times d_{out}}$ is the target output, $\sigma : \mathbb{R}
    \rightarrow \mathbb{R}$ is the activation function, $W_1 \in \mathbb{R}^{d_{in}
    \times d_{h}}$ is the first layer weights, $W_2 \in \mathbb{R}^{d_{h} \times
    d_{out}}$ is the second layer weights, and $\| \cdot \|_F$ denotes the
Frobenius norm.

After applying a weight decay coefficient $\lambda > 0$, we get:
\begin{equation}
    \label{eq:two-layer-loss-decay}
    \mathcal{L}_{\lambda} = \mathcal{L} + \lambda \left( \| W_1 \|_F^2 + \| W_2 \|_F^2 \right)
\end{equation}

\subsection{Isolated Dynamics of the First Layer}

Using the approximation from \cref{sec:approx-cost}, we can isolate the
learning dynamics of the first layer by assuming that the second layer weights
are optimal for the current value of the first layer weights:
\begin{equation}
    \label{eq:two-layer-phi}
    W_2 \approx \phi(W_1) = \arg \min_{\tilde{W}_2} \mathcal{L}_{\lambda}(W_1, \tilde{W}_2)
\end{equation}

Since the second layer is just a linear transformation of the hidden layer
activations $H = \sigma(X W_1)$, finding the optimal second layer weights is
equivalent to the classic problem of ridge regression \citep{hoerl1970ridge}.
The solution is given by:
\begin{equation}
    \label{eq:two-layer-ridge}
    \phi(W_1) = (H^\top H + \lambda I)^{-1} H^\top Y
\end{equation}

By combining
\cref{eq:two-layer-loss,eq:two-layer-loss-decay,eq:two-layer-ridge}, we can
obtain the cost function for the isolated learning dynamics of the first layer:
\begin{equation}
    \label{eq:two-layer-cost}
    \mathcal{R}(W_1) = \mathcal{L}_{\lambda}(W_1, \phi(W_1))
\end{equation}

This cost function is not particuarly simple, but it is fully differentiable,
allowing us to approximate the learning dynamics of the first layer:
\begin{equation}
    \label{eq:two-layer-gradient}
    \dot{W}_1 \approx -\nabla \mathcal{R}(W_1)
\end{equation}

\subsection{Zero-Loss Approximation}

We further assume a strongly overparametrized regime with more hidden units that training samples ($d_h > n$). This allows the second layer to fit the outputs perfectly almost always. Following the theoretical framework developed in \cref{sec:weight-decay}, we can further simplify equation \cref{eq:two-layer-ridge} by working in the limit
of very small weight decay $\lambda \rightarrow 0$:
\begin{equation}
    \label{eq:two-layer-ridge-zero}
    \phi(W_1) = H^+ Y 
\end{equation}

where $H^+$ is the
Moore-Penrose pseudo-inverse of $H$. When $H$ has linearly independent columns,
then the pseudo-inverse is given by $H^+ = (H^\top H)^{-1} H^\top$. If $H$ has linearly independent rows, then $H^+ = H^\top (H H^\top)^{-1}$. Note that $H \in \R^{n \times d_h}$. Given the overparameterized regime ($d_h > n$), we use the latter. This allows us to further simplify the cost function down to:
\begin{equation}
    \label{eq:two-layer-cost-zero}
    \mathcal{R}(W_1) = \lambda \, \| W_1 \|_F^2 + \lambda \Tr\bigl( Y^\top (H H^\top)^{-1} Y \bigr)
\end{equation}

By differentiating this cost function, we can obtain a closed-form expression
for the isolated learning dynamics of the first layer in the overparameterized
zero-loss approximation:
\begin{equation}
    \label{eq:two-layer-gradient-zero}
    \dot{W_1} \approx  X\tp\Bigl(\bigl(A Y Y\tp A H\bigr) \odot \sigma'\bigl(X W_{1}\bigr)\Bigr) -W_{1}
\end{equation}

where $H = \sigma(X W_1)$, $A = (H H\tp)^{-1}$, $\sigma$ is the activation
function, and $\odot$ denotes the Hadamard product. We provide a detailed
derivation in \cref{app:two-layer}.

\section{Simulated Dynamics}
\label{sec:experiments}

In this section, we empirically validate our combined theoretical insights on
isolated dynamics and post-memorization dynamics. By applying equation
\cref{eq:two-layer-gradient-zero} to a network trained on the modular addition
task, we show that our approximations reproduce the delayed generalization and
circular representations characteristic of grokking.

\begin{figure}[ht]
    \vskip 0.2in
    \begin{align*}
        \includegraphics[width=0.45 \columnwidth]{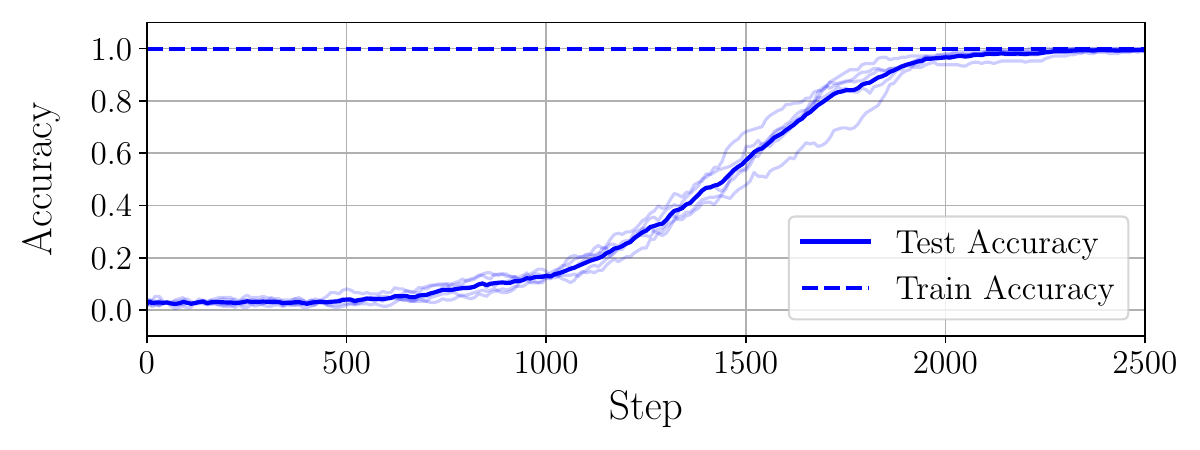} \hspace{0.05 \columnwidth}
        \includegraphics[width=0.45 \columnwidth]{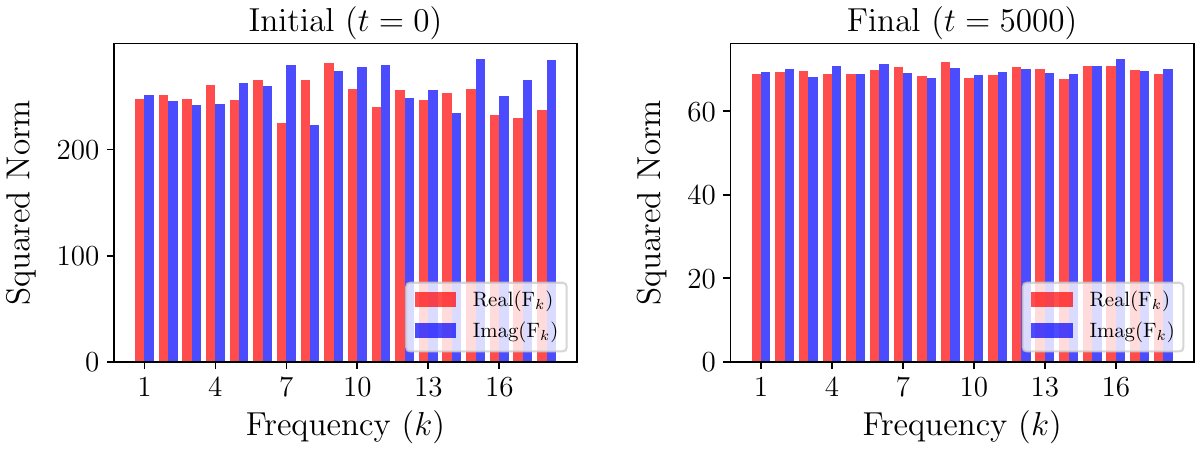} \hspace{0.03 \columnwidth}\\
        \includegraphics[width=0.45 \columnwidth]{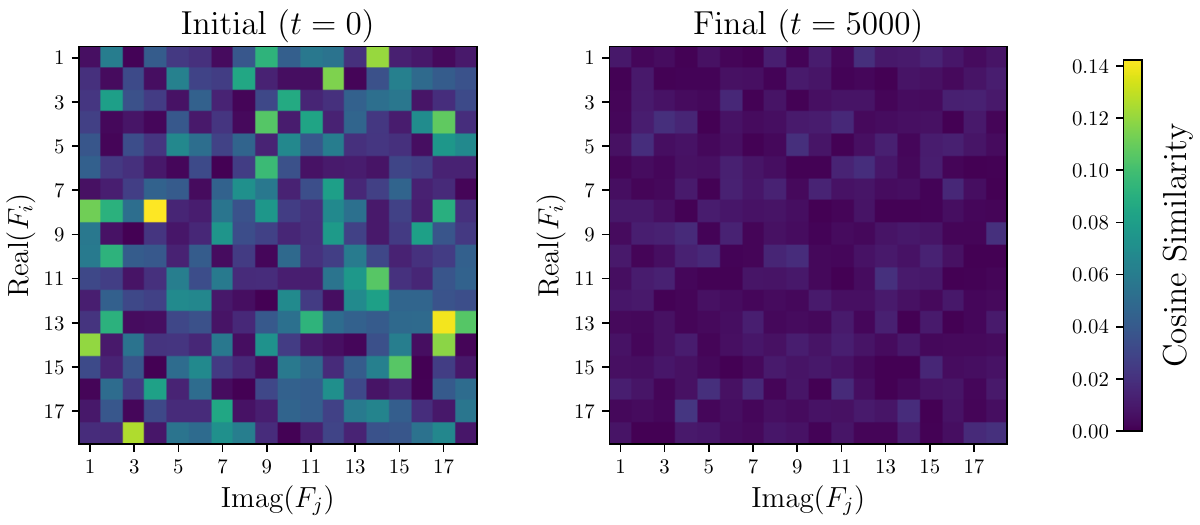} \hspace{0.05 \columnwidth}
        \includegraphics[width=0.45 \columnwidth]{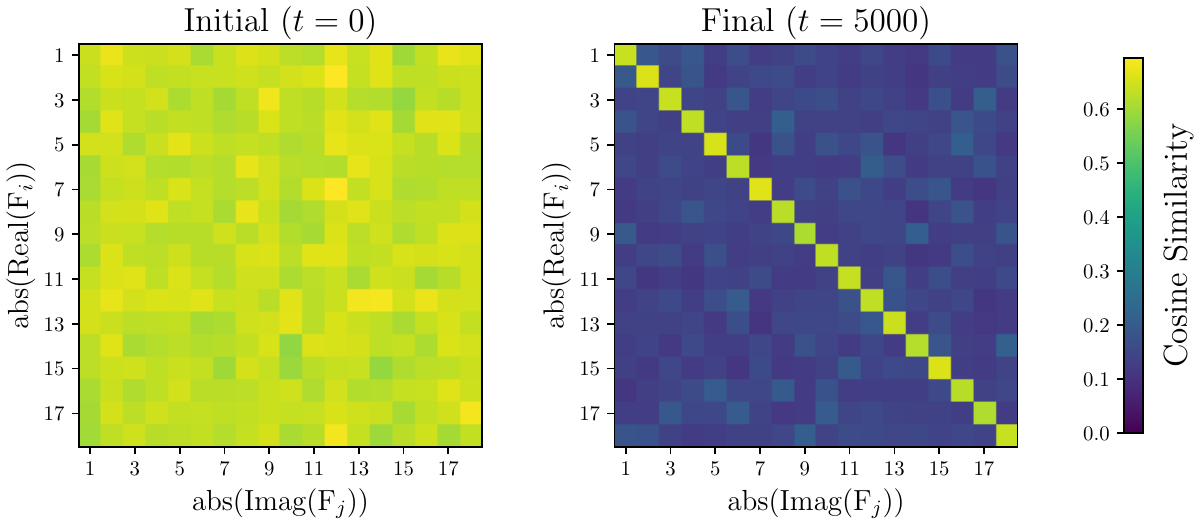}
    \end{align*}
    \caption{Simulated dynamics according to \cref{eq:two-layer-gradient-zero} reproduce the phenomena of delayed generalization and representation learning. \emph{Top left:}  generalization emerges after about $1000$ steps, despite training loss being exactly zero throughout. \emph{Top right:} Fourier features norms equalize, suggesting the presence of equally-sized circles. \emph{Bottom left:} Fourier features become orthogonal, suggesting that circles are located in orthogonal planes. \emph{Bottom right:} Fourier features absolute values become dissimilar, suggesting that each circle leverages a different subset of hidden activations.}
    \label{fig:simulated-dynamics}
\end{figure}

\subsection{Dataset}

We train the network to perform modular addition modulo a fixed number $p$. The
dataset consists of $k = p(p+1)/2$ unique input pairs and their sum, $
    D = \{(a, b, c) \mid 0 \leq a \leq b < p, \;  c = (a + b) \; \mathrm{mod} \; p\}
$. We construct the input data $X \in \mathbb{R}^{k \times p}$ and the target
output $Y \in \mathbb{R}^{k \times p}$ as $X_{i} = e_{a_i} + e_{b_i}$ and $Y_{i} = e_{c_i}$ for all $i = 1, \ldots, k$, where $e_i$ is the $i$-th unit vector in $\mathbb{R}^p$ and $(a_i, b_i, c_i) \in D$ is the $i$-th sample in the dataset.

We split the dataset into $(X_{\text{train}}, Y_{\text{train}})$ and
$(X_{\text{test}}, Y_{\text{test}})$, using a fraction $f_s$ of the dataset for
training and the remaining $1 - f_s$ for testing.

\subsection{Architecture}

We train a two-layer neural network with input dimension $p$, hidden dimension
$d_h$, output dimension $p$, and a non-linear activation $\sigma : \mathbb{R}
    \rightarrow \mathbb{R}$.

Since the inputs are sums of one-hot vectors, we refer to the first layer
weights as the embedding matrix $E \in \mathbb{R}^{p \times d_h}$. We refer to
the second layer weights as simply the weights matrix $W \in \mathbb{R}^{d_h
        \times p}.$

The network output is given by $\hat{Y} = \sigma(X E) W.$
To emphasize the role of the first layer as an embedding, we can also write the
output as $\hat{Y}_i = \sigma(E_{a_i} + E_{b_i}) W.$

We define the test accuracy as the percentage of correctly predicted test
samples. We say that a test sample is correctly predicted if the index of the
maximum value in the predicted output $\hat{Y}_i$ matches $c_i$.

\subsection{Simulated Optimization}

Our goal is not to train the network, but to validate that our
approximate learning dynamics reproduce the phenomena observed during standard
training.

We simulate the evolution of the embedding matrix $E$ under the isolated
dynamics given by equation \cref{eq:two-layer-gradient-zero}. We start from a
random initialization $E \sim \mathcal{N}(0, \, p^{-1/2})$ and update it for $T$
steps as $E \leftarrow E + \eta \, \Delta E$, where $\eta > 0$ is the step size and $\Delta E  = X\tp\left(\left(A Y Y\tp A H\right) \odot \sigma'\bigl(X E\bigr)\right) - E$.

The isolated dynamics assume that $W = (H\tp H)^{-1} H\tp Y$ is optimal for the current value of $E$, which guarantees zero loss and perfect accuracy on the training data throughout
training. This also ensures that predicted outputs perfectly match the target outputs on
the training data, which is less principled for a classification task, but
generally performs well in practice \citep{rifkin2003regularized}.

\paragraph{Details.} We use $p = 37$, $d_h = 512$, $\sigma(x) = \max(0, x)$, $f_s = 0.7$, $\eta =
    10^{-3}$, $T = 5000$.

\subsubsection{Delayed Generalization}

We simulate 5 runs starting from different random initializations and plot the
test accuracy in \cref{fig:simulated-dynamics}. Despite the fact that the training
loss is exactly zero throughout, the test accuracy is not better than random
guessing for the first $500$ steps. However, the network eventually achieves
perfect generalization on the test data after about $2000$ steps, reproducing the delayed generalization phenomena \citet{power2022grokking}.

\subsubsection{Fourier Features}

Using a discrete Fourier transform, we decompose the embedding matrix $E$ into a linear combination of circles with different frequencies:
$$
    F_k = \frac{1}{p} \, \sum_{j=0}^{p-1} e^{-i 2\pi \, j k / p} \, E_j
    \qquad\qquad \forall k \in \{1, \ldots, (p-1)/2\}
$$
Projecting the embeddings onto the plane spanned by $\Re(F_k)$ and $\Im(F_k)$
gives us a circle where the embeddings appear in the order $
    \{ 0, \, k, \, 2k, \, 3k, \, \ldots, \, (p-1)k \} \, \mathrm{ mod } \, p.
$ Note that a circle of frequency $k$ is equivalent to a circle of frequency
$p-k$, so we only need to consider frequencies up to $(p-1)/2$.

We visualize several comparisons of the Fourier features of the initial and final embedding matrices for a single run \cref{fig:simulated-dynamics}. First, the norms of the real and imaginary parts of the Fourier features equalize, suggesting the presence of \emph{equally sized circles with perfect aspect ratios}. Second, the real and imaginary parts of the Fourier features become orthogonal, indicating that \emph{circles are located in orthogonal planes}. Third, by taking the absolute value of the real and imaginary parts of the Fourier features, we obtain vectors very similar for the same frequency, but very different for different frequencies. This suggests that \emph{each circle leverages a different subset of hidden units}.

\section{Conclusion}

We have formally established that the learning dynamics of neural networks in the grokking regime approximate as the minimization of the weight norm within the zero-loss set. Additionally, we have established a theoretical basis for approximating the learning dynamics of individual network components.
          
\paragraph{Limitations.} This work does not cover cross-entropy loss, which is commonly used in practice. With regard to isolated dynamics, our work is limited to the case of two-layer networks. Exciting challenges lie ahead in understanding the grokking dynamics of more complex settings and architectures.

\paragraph{Impact Statement.} We believe that understanding the learning dynamics of neural networks is essential for the design of more efficient and accurate AI systems. However, the development and deployment of such systems should be approached with caution.

\paragraph{LLM Usage.} Large Language Models (LLMs) were used in standard ways throughout this work to polish the writing, assist with coding, and support brainstorming of mathematical proofs.

\paragraph{Reproducibility.} We provide the full code for training and plotting used for the experiments from \cref{sec:real-dynamics,,sec:experiments}.

\bibliography{iclr2026_conference}

@article{doshi2017towards,
  title   = {Towards a rigorous science of interpretable machine learning},
  author  = {Doshi-Velez, Finale and Kim, Been},
  journal = {arXiv preprint arXiv:1702.08608},
  year    = {2017}
}

@article{sard1942measure,
  title={The measure of the critical values of differentiable maps},
  author={Sard, Arthur},
  year={1942}
}

@article{morse1939behavior,
  title={The behavior of a function on its critical set},
  author={Morse, Anthony P},
  journal={Annals of Mathematics},
  volume={40},
  number={1},
  pages={62--70},
  year={1939},
  publisher={JSTOR}
}

@article{kohoutova2020toward,
  title     = {Toward a unified framework for interpreting machine-learning models in neuroimaging},
  author    = {Kohoutov{\'a}, Lada and Heo, Juyeon and Cha, Sungmin and Lee, Sungwoo and Moon, Taesup and Wager, Tor D and Woo, Choong-Wan},
  journal   = {Nature protocols},
  volume    = {15},
  number    = {4},
  pages     = {1399--1435},
  year      = {2020},
  publisher = {Nature Publishing Group UK London}
}

@article{power2022grokking,
  title   = {Grokking: Generalization beyond overfitting on small algorithmic datasets},
  author  = {Power, Alethea and Burda, Yuri and Edwards, Harri and Babuschkin, Igor and Misra, Vedant},
  journal = {arXiv preprint arXiv:2201.02177},
  year    = {2022}
}

@inproceedings{liu2022omnigrok,
  title     = {Omnigrok: Grokking beyond algorithmic data},
  author    = {Liu, Ziming and Michaud, Eric J and Tegmark, Max},
  booktitle = {The Eleventh International Conference on Learning Representations},
  year      = {2022}
}

@article{humayun2024dnn,
  title   = {Deep networks always grok and here is why},
  author  = {Humayun, Ahmed Imtiaz and Balestriero, Randall and Baraniuk, Richard},
  journal = {arXiv preprint arXiv:2402.15555},
  year    = {2024}
}

@article{gromov2023grokking,
  title   = {Grokking modular arithmetic},
  author  = {Gromov, Andrey},
  journal = {arXiv preprint arXiv:2301.02679},
  year    = {2023}
}

@article{zhong2024clock,
  title   = {The clock and the pizza: Two stories in mechanistic explanation of neural networks},
  author  = {Zhong, Ziqian and Liu, Ziming and Tegmark, Max and Andreas, Jacob},
  journal = {Advances in Neural Information Processing Systems},
  volume  = {36},
  year    = {2024}
}

@article{nanda2023progress,
  title   = {Progress measures for grokking via mechanistic interpretability},
  author  = {Nanda, Neel and Chan, Lawrence and Lieberum, Tom and Smith, Jess and Steinhardt, Jacob},
  journal = {arXiv preprint arXiv:2301.05217},
  year    = {2023}
}

@article{sucholutsky2023getting,
  title   = {Getting aligned on representational alignment},
  author  = {Sucholutsky, Ilia and Muttenthaler, Lukas and Weller, Adrian and Peng, Andi and Bobu, Andreea and Kim, Been and Love, Bradley C and Grant, Erin and Groen, Iris and Achterberg, Jascha and others},
  journal = {arXiv preprint arXiv:2310.13018},
  year    = {2023}
}

@article{krogh1991simple,
  title   = {A simple weight decay can improve generalization},
  author  = {Krogh, Anders and Hertz, John},
  journal = {Advances in neural information processing systems},
  volume  = {4},
  year    = {1991}
}

@article{smith2018disciplined,
  title   = {A disciplined approach to neural network hyper-parameters: Part 1--learning rate, batch size, momentum, and weight decay},
  author  = {Smith, Leslie N},
  journal = {arXiv preprint arXiv:1803.09820},
  year    = {2018}
}

@article{van2024representations,
  title   = {When Representations Align: Universality in Representation Learning Dynamics},
  author  = {van Rossem, Loek and Saxe, Andrew M},
  journal = {arXiv preprint arXiv:2402.09142},
  year    = {2024}
}

@article{liu2022towards,
  title   = {Towards understanding grokking: An effective theory of representation learning},
  author  = {Liu, Ziming and Kitouni, Ouail and Nolte, Niklas S and Michaud, Eric and Tegmark, Max and Williams, Mike},
  journal = {Advances in Neural Information Processing Systems},
  volume  = {35},
  pages   = {34651--34663},
  year    = {2022}
}

@article{mehtaneural,
  title  = {Neural Embeddings Evolve as Interacting Particles},
  author = {Mehta, Rohan and Liu, Ziming and Tegmark, Max}
}

@article{musat2024clustering,
  title   = {Clustering and Alignment: Understanding the Training Dynamics in Modular Addition},
  author  = {Musat, Tiberiu},
  journal = {arXiv preprint arXiv:2408.09414},
  year    = {2024}
}

@article{hoerl1970ridge,
  title     = {Ridge regression: Biased estimation for nonorthogonal problems},
  author    = {Hoerl, Arthur E and Kennard, Robert W},
  journal   = {Technometrics},
  volume    = {12},
  number    = {1},
  pages     = {55--67},
  year      = {1970},
  publisher = {Taylor \& Francis}
}

@article{rifkin2003regularized,
  title     = {Regularized least-squares classification},
  author    = {Rifkin, Ryan and Yeo, Gene and Poggio, Tomaso and others},
  journal   = {Nato Science Series Sub Series III Computer and Systems Sciences},
  volume    = {190},
  pages     = {131--154},
  year      = {2003},
  publisher = {Citeseer}
}

@book{morters2010brownian,
  title={Brownian motion},
  author={M{\"o}rters, Peter and Peres, Yuval},
  volume={30},
  year={2010},
  publisher={Cambridge University Press}
}

@article{dalang2021multiple,
  title={Multiple points of Gaussian random fields},
  author={Dalang, Robert C and Lee, Cheuk Yin and Mueller, Carl and Xiao, Yimin},
  year={2021}
}

@article{boursier2026theoretical,
  title={A theoretical framework for grokking: Interpolation followed by Riemannian norm minimisation},
  author={Boursier, Etienne and Pesme, Scott and Dragomir, Radu-Alexandru},
  journal={Advances in Neural Information Processing Systems},
  volume={38},
  pages={151231--151267},
  year={2026}
}
\bibliographystyle{iclr2026_conference}

\appendix


\newpage
\section{Proof of \cref{thm:z-stability} (Stability of $\ZZ$)}
\label{app:z-stability}

We begin by establishing the following intermediate result:

\begin{lemma}
    \label{lem:low-loss}
    For every initialization $\theta(0) \in \mathcal{Z}$ and every $\epsilon > 0$, there exists $\lambda_\epsilon > 0$ such that for all $0 < \lambda < \lambda_\epsilon$ the corresponding trajectory $\theta(t)$ under $\Loss_\lambda$ satisfies
    \begin{equation}
        \sup_{t \ge0} \Loss(\theta(t)) < \epsilon.
    \end{equation}
\end{lemma}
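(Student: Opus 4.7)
The plan is to use $\mathcal{L}_\lambda$ itself as a Lyapunov function along the gradient flow, exploiting the fact that at the initialization the loss contribution vanishes so $\mathcal{L}_\lambda(\theta(0))$ can be made arbitrarily small by shrinking $\lambda$. Concretely, since $\theta(0)\in\mathcal{Z}$, we have $\mathcal{L}(\theta(0))=0$, and therefore
\begin{equation}
    \mathcal{L}_\lambda(\theta(0)) = \lambda\,\|\theta(0)\|^2.
\end{equation}
Along the flow $\dot\theta = -\nabla\mathcal{L}_\lambda(\theta)$, the standard identity $\tfrac{d}{dt}\mathcal{L}_\lambda(\theta(t)) = -\|\nabla\mathcal{L}_\lambda(\theta(t))\|^2 \le 0$ gives monotone non-increase, hence
\begin{equation}
    \mathcal{L}_\lambda(\theta(t)) \;\le\; \mathcal{L}_\lambda(\theta(0)) \;=\; \lambda\,\|\theta(0)\|^2 \qquad \text{for all } t\ge 0.
\end{equation}

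Next I would separate the two non-negative terms in $\mathcal{L}_\lambda = \mathcal{L} + \lambda\|\cdot\|^2$ to extract an uniform bound on the loss alone: since $\lambda\|\theta(t)\|^2 \ge 0$, the previous inequality implies
\begin{equation}
    \mathcal{L}(\theta(t)) \;\le\; \mathcal{L}_\lambda(\theta(t)) \;\le\; \lambda\,\|\theta(0)\|^2,
\end{equation}
which holds uniformly in $t$. Given $\epsilon>0$, it then suffices to pick any
\begin{equation}
    \lambda_\epsilon \;=\; \frac{\epsilon}{1+\|\theta(0)\|^2},
\end{equation}
so that for every $0<\lambda<\lambda_\epsilon$ we obtain $\sup_{t\ge 0}\mathcal{L}(\theta(t)) < \epsilon$, as required.

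I do not anticipate a real obstacle: the argument is a one-line Lyapunov estimate, and the degenerate case $\|\theta(0)\|=0$ is handled automatically (any $\lambda>0$ works, since then $\mathcal{L}_\lambda(\theta(0))=0$ and the trajectory is trapped at a global minimizer of $\mathcal{L}_\lambda$). The only minor care needed is existence and global-in-time well-posedness of the gradient flow, which follows from the smoothness of $\mathcal{F}$ (hence of $\mathcal{L}_\lambda$) together with the fact that $\mathcal{L}_\lambda(\theta(t))$ is bounded, which keeps $\|\theta(t)\|$ bounded and prevents finite-time blow-up.
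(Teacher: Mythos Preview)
Your proof is correct and follows essentially the same approach as the paper: use the monotonicity of $\mathcal{L}_\lambda$ along gradient flow together with $\mathcal{L}_\lambda(\theta(0))=\lambda\|\theta(0)\|^2$ to bound $\mathcal{L}(\theta(t))$ uniformly, then choose $\lambda$ small enough. Your version is slightly more careful (the choice $\lambda_\epsilon=\epsilon/(1+\|\theta(0)\|^2)$ avoids the division-by-zero when $\theta(0)=0$, and you note global existence of the flow), but the core argument is identical.
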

\begin{proof}
    From \cref{eq:gradient-flow}, we see that gradient flow will never increase the optimized quantity $\mathcal{L}_{\lambda}(\theta) = \Loss(\theta) + \lambda\|\theta\|^2$. By choosing $\lambda_\epsilon < \epsilon / \|\theta(0)\|^2$, we ensure that $\Loss(\theta(t)) < \epsilon$ for all $t \ge 0$.
\end{proof}

We recall \cref{thm:z-stability}:

\zstability*

\begin{proof}
Our training trajectory will not reach any $\theta \in \mathbb{R}^d$ with $\|\theta\| > \| \theta(0) \|$. This is because any such configuration is unreachable by gradient flow from $\theta(0)$ for any $\lambda > 0$ since $\mathcal{L}_{\lambda}(\theta) > \mathcal{L}_{\lambda}(\theta_0)$.

We are left to show unreachability of the set $\Phi = \{ \theta \in \mathbb{R}^d : \mathcal{D}(\theta) \geq \epsilon $ and $ \|\theta\| \leq \| \theta(0) \| \}$. Since $\Phi$ is compact, $m = \min_{\Phi} \mathcal{L}(\theta)$ exists and is positive. Applying \cref{lem:low-loss}, there exists $\lambda > 0$ such that optimizing $\mathcal{L}_{\lambda}$ starting from $\theta(0)$ is guaranteed to maintain $\mathcal{L}(\theta(t)) < m$, thus making $\Phi$ unreachable.
\end{proof}

\newpage
\section{Proof of \cref{thm:z-regularity} (Regularity of $\mathcal{Z}$)}
\label{app:no-singularities}

We consider the inputs vectors $x_i \in \R^{n}$ fixed and we show that singularities will not be encountered for \textit{almost all} possible target outputs $y_i \in \R^{m}$, where $i \in \{1,\ldots,k\}$. Recall that we are training on $k$ samples with input dimension $n$ and output dimension $m$.

We denote the concatenated network outputs with network parameters $\theta \in\R^d$ using $\mathcal{F}:\R^d \rightarrow\R^{km}$, where
\begin{equation}
    \mathcal{F}(\theta) =  \Big[ \; f(\theta, x_1)\tp, \; f(\theta, x_2)\tp, \;\ldots, f(\theta, x_n)\tp \; \Big]\tp.
\end{equation}

We denote the concatenated target outputs as $y = \left[ \; {y_1}^\top, \; {y_2}^\top ,\; \ldots, \; {y_k}^\top \; \right]^\top \in \R^{km}$.

For any vector of target outputs $y \in \R^{km}$, we denote the set of parameters that fit them as
\begin{equation}
    \mathcal{Z}_y = \big\{\; \theta \in \R^d \; \big|\; \mathcal{F}(\theta) =y \; \big\}.
\end{equation}
Note that $\mathcal{Z}_y$ is exactly the zero-loss set with target outputs $y \in \R^{km}$.

We will show that $\mathcal{Z}_y$ contains no singular points for \emph{almost} every $y \in \R^{km}$. More precisely, we show that $\mathcal{Z}_y \cap \mathrm{Crit}(\mathcal{F}) = \emptyset$ for all $y \in \R^{km}$ except a set of Lebesgue measure zero.

We denote the set of target output vectors that lead to singularities as
\begin{equation}
    \xi \;=\; \big\{\; y \in \R^{km} \;\big|\; \mathcal{Z}_y \cap \mathrm{Crit}(\mathcal{F}) \ne \emptyset \;\big\}
\end{equation}

\begin{proposition}
    The set of target outputs that lead to singularities is exactly the image of the set of singular points:
    \begin{equation}
        \xi \;=\; \mathcal{F}\big( \mathrm{Crit}(\mathcal{F}) \big).
    \end{equation}
\end{proposition}
\begin{proof}
    By our definitions, we have that $y \in \xi$ if and only if there exists $\theta \in \R^d$ such that $\mathcal{F}(\theta) = y$ and $\theta \in \mathrm{Crit}(\mathcal{F})$. First, any $\theta \in \mathrm{Crit}(\mathcal{F})$ implies that $\mathcal{F}(\theta) \in \xi$. Second, for some $y \in \R^{km}$, if no $\theta \in \mathrm{Crit}(\mathcal{F})$ exists such that $\mathcal{F}(\theta) = y$, then $y \not\in \xi$.
\end{proof}

\begin{theorem}[Morse–Sard theorem] If a function $g:\R^a \rightarrow\R^b$ is continuously differentiable $k$ times, where $k \ge \mathrm{max}(a - b + 1, 1)$, then the image of its critical set $g\big( \mathrm{Crit}(g) \big)$ has Lebesgue measure zero in $\R^b$.
\end{theorem}

This result was first proved by \citet{morse1939behavior} for the single-output functions (i.e., $b=1$), and later generalized to differentiable maps by \citet{sard1942measure}.

By \cref{aspt:smooth-network}, the function $\mathcal{F}$ fits the continuous differentiability criteria. Therefore, we get that the set of target vectors that lead to singularities $\xi = \mathcal{F}\big( \mathrm{Crit}(\mathcal{F}) \big) \subset \R^{km}$ has Lebesgue measure zero in $\R^{km}$.

\newpage
\section{Proof of \cref{thm:weight-norm} (Gradient Orthogonality)}
\label{app:gradient-orthogonality}

We begin with a few intermediate results.

\begin{lemma} \label{lem:hessian}
    At a non-singular point of the zero-loss set, the tangent space is exactly the null space of the Hessian matrix, i.e. $\TT_\theta = \{ \; v \in \R^d \; | \; \nabla^2\Loss(\theta) \,v=0 \; \}\ $ for any non-singular $\theta \in \ZZ$.
\end{lemma}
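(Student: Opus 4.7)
The plan is to show that at a non-singular $\theta \in \ZZ$ both $\TT_\theta$ and $\ker(\nabla^2\Loss(\theta))$ coincide with $\ker(\nabla\FF(\theta))$. The argument splits into a Hessian calculation on the algebraic side and a local characterisation of $\ZZ$ via the implicit function theorem on the geometric side.

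First I would rewrite the loss as $\Loss(\theta) = \|\FF(\theta) - Y\|^2$ with $Y = [y_1^\top,\ldots,y_k^\top]^\top \in \R^{km}$ and differentiate twice to obtain
\[
\nabla^2\Loss(\theta) \;=\; 2\,\nabla\FF(\theta)^\top \nabla\FF(\theta) \;+\; 2\sum_{j=1}^{km} \bigl(\FF_j(\theta) - Y_j\bigr)\,\nabla^2\FF_j(\theta).
\]
At any $\theta \in \ZZ$ the residual $\FF(\theta) - Y$ vanishes, so the correction term drops and $\nabla^2\Loss(\theta) = 2\,\nabla\FF(\theta)^\top\nabla\FF(\theta)$. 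Since $\ker(A^\top A) = \ker(A)$ for any real matrix $A$, this immediately yields $\ker(\nabla^2\Loss(\theta)) = \ker(\nabla\FF(\theta))$.

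Next I would dispatch the easy inclusion $\TT_\theta \subseteq \ker(\nabla\FF(\theta))$ by pure differentiation: for any $v \in \TT_\theta$ with realising curve $s$, the condition $\Loss(s(t)) = 0$ forces $\FF(s(t)) = Y$ identically, and differentiating at $t = 0$ gives $\nabla\FF(\theta)\,v = 0$.

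The main obstacle is the reverse inclusion $\ker(\nabla\FF(\theta)) \subseteq \TT_\theta$, which requires producing smooth zero-loss curves tangent to any prescribed null vector. Non-singularity together with the over-parametrisation assumption $d \ge km$ means that $\nabla\FF(\theta) \in \R^{km \times d}$ has rank $km$ and is therefore a submersion at $\theta$. By the submersion form of the implicit function theorem, there is an open neighbourhood $U$ of $\theta$ such that $\FF^{-1}(Y) \cap U$ is a smooth embedded submanifold of $\R^d$ of dimension $d - km$, whose tangent space at $\theta$ is exactly $\ker(\nabla\FF(\theta))$. Since $\FF^{-1}(Y) \cap U \subseteq \ZZ$, every tangent vector to this submanifold is realised by a smooth zero-loss curve through $\theta$, so every $v \in \ker(\nabla\FF(\theta))$ lies in $\TT_\theta$, closing the chain of equalities.
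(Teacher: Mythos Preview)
Your proof is correct and follows essentially the same route as the paper: both reduce the Hessian at $\theta \in \ZZ$ to $2\,\nabla\FF(\theta)^\top\nabla\FF(\theta)$, identify its kernel with $\ker(\nabla\FF(\theta))$, and then invoke the submersion/inverse function theorem to match this kernel with $\TT_\theta$. Your argument for the inclusion $\TT_\theta \subseteq \ker(\nabla\FF(\theta))$ is slightly cleaner than the paper's, which differentiates $\Loss(s(t))=0$ twice and appeals to positive semidefiniteness, whereas you differentiate $\FF(s(t))=Y$ once.
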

\begin{proof} \emph{Part I:} $\nabla^2\Loss(\theta) \; v = 0 \;\Rightarrow\, v \in \TT_{\theta}$.

We write the loss function as
\[
    \mathcal{L}(\theta) = \| \mathcal{F}(\theta) - y_{all} \|^2
\]

using \(y_{all} \in \mathbb{R}^{km}\) as the concatenation of all target outputs:
\begin{equation*}
y_{all} = \left[ \ y_1\tp,\ y_2\tp, \ \ldots, \ y_k\tp \ \right]\tp
\end{equation*}

Differentiating \(\mathcal{L}(\theta)\) with respect to \(\theta\), we obtain
the gradient:
\[
    \nabla \mathcal{L}(\theta) = 2 \, \nabla \mathcal{F}(\theta)^\top \bigl(\mathcal{F}(\theta) - y \bigr),
\]
where \(\nabla \mathcal{F}(\theta) \in \mathbb{R}^{km \times d}\) is the
Jacobian of the network output with respect to the parameters.

Differentiating again, we obtain the Hessian:
\[
    \nabla^2 \mathcal{L}(\theta) = 2 \, \nabla \mathcal{F}(\theta)^\top \nabla \mathcal{F}(\theta) + 2 \sum_{i=1}^{km} \bigl(\mathcal{F}_i(\theta) - (y_{all})_{i} \bigr) \nabla^2 \mathcal{F}_i(\theta),
\]
where \(\nabla^2 \mathcal{F}_i(\theta) \in \mathbb{R}^{d \times d}\) is the
Hessian of the \(i\)-th output component.

At \(\theta \in \mathcal{Z}\), where \(\mathcal{F}(\theta) =
y_{all}\), the second term vanishes, simplifying the Hessian to:
\[
    \nabla^2 \mathcal{L}(\theta) = 2 \, \nabla \mathcal{F}(\theta)^\top \nabla \mathcal{F}(\theta).
\]

For any direction \( v \in \mathbb{R}^d \), this yields:
\[
    v^\top \nabla^2 \mathcal{L}(\theta) v = 2 \, \|\nabla \mathcal{F}(\theta) v\|^2.
\]

Therefore,
\[
    v\tp \, \nabla^2 \mathcal{L}(\theta) \, v = 0 \quad \Leftrightarrow \quad \nabla \mathcal{F}(\theta) \, v = 0.
\]

Moreover, every $\theta \in \ZZ$ is a local minimum where the Hessian matrix is symmetric and positive semi-definite. This gives the us equivalence
\[
    \nabla^2 \mathcal{L}(\theta) \ v = 0 \quad \Leftrightarrow \quad v\tp \ \nabla^2 \mathcal{L}(\theta) \ v = 0.
\]

Therefore,
\[
    \nabla^2 \mathcal{L}(\theta) \, v = 0 \quad \Leftrightarrow \quad \nabla \mathcal{F}(\theta) \, v = 0.
\]

Because \( km \ge d \) and \(\nabla \mathcal{F}(\theta)\) has full rank, the \emph{inverse function theorem} implies that the preimage of \(\mathcal{F}\) locally has the structure of a smooth manifold whose tangent space is exactly the null space of \(\nabla \mathcal{F}(\theta)\). To simplify our analysis, we directly restate the inverse function theorem below in a form that is slightly non-standard, but perfectly equivalent:

\begin{theorem}[Inverse Function Theorem]
    Assume that $\mathcal{F} : \R^d \rightarrow \R^{km}$ is a continuously differentiable function with $d \ge km$ and $\mathrm{rank}(\nabla\mathcal{F}(\theta)) = km$ for some $\theta \in \R^d$. Then, for all $v \in R^d$ such that $\nabla\mathcal{F}(\theta) \, v = 0$, there exists a smooth trajectory $s : R \rightarrow R^d$ such that $s(0) = \theta,\ s'(0) = v,\,$ and $\FF(s(t)) = \FF(\theta)\,$ for all $t \in \R$.
\end{theorem}

Note that any such trajectory will also have $\Loss(s(t)) = 0$ since $\FF(s(t)) = \FF(\theta) 
= y_{all}$. This implies the desired result.

\newpage
\emph{Part II: }  $v \in \TT_{\theta} \;\Rightarrow\, \nabla^2\Loss(\theta) \; v = 0$.

Using the equation $\mathcal{L}(f(t)) = 0$ from \cref{def:zero-loss-direction} and differentiating it twice with respect to $t$, we obtain:
\begin{equation}
    \label{eq:loss-traj}
    f'(t)^\top \nabla^2 \mathcal{L}(f(t)) f'(t) + \nabla \mathcal{L}(f(t))^\top f''(t) = 0.
\end{equation}

Since any point $\theta \in \ZZ$ is a local minimum, we have that $\nabla \mathcal{L}(\theta) = 0$ and $\nabla^2 \mathcal{L}(\theta)$ is symmetric and positive semi-definite (PSD).

By evaluating \cref{eq:loss-traj} at $t = 0$, we obtain that $v^\top \, \nabla^2 \mathcal{L}(\theta) \, v = 0$. Since $\nabla^2 \mathcal{L}(\theta)$ is symmetric and PSD, this implies the desired result.

\end{proof}

\begin{definition}[Normal Space]
    Let $\mathcal{N}_{\phi} = \{ \, \alpha \in \R^d \;|\; \alpha\tp v = 0, \;\forall v \in \TT_\phi \,\}$ denote the normal space at a point $\phi \in \ZZ$.
\end{definition}
\begin{proposition}
    The displacement of a point from the zero-loss set belongs to the normal space at the point's projection, i.e. $\theta - \proj(\theta) \,\in\, \NN_{\proj(\theta)}\,$ for all $\theta \in \R^d$.
\end{proposition}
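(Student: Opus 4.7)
The plan is to argue that the displacement vector $\theta - \proj(\theta)$ is orthogonal to every available direction at $\phi := \proj(\theta)$ by a first-order minimization argument along a trajectory inside $\ZZ$. Concretely, I would fix an arbitrary $v \in \TT_\phi$ and invoke \cref{def:zero-loss-direction} to obtain a smooth curve $s : \R \to \R^d$ with $s(0)=\phi$, $s'(0)=v$, and $s(t) \in \ZZ$ for all $t$.

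Next, I would define the squared-distance function $g(t) = \|\theta - s(t)\|^2$. Since $s(t) \in \ZZ$ for all $t$ and $\phi$ realizes the infimum in \cref{def:projection}, we have $g(t) \ge g(0)$ for every $t \in \R$, so $t=0$ is a global (hence local) minimum of $g$. Because $s$ is smooth, $g$ is differentiable and the first-order condition gives
\begin{equation*}
    0 \;=\; g'(0) \;=\; -2\,(\theta - \phi)\tp s'(0) \;=\; -2\,(\theta - \phi)\tp v .
\end{equation*}
Since $v \in \TT_\phi$ was arbitrary, this shows $(\theta - \phi)\tp v = 0$ for every $v \in \TT_\phi$, which is exactly the statement $\theta - \phi \in \NN_\phi$ by the definition of the normal space.

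The main potential obstacle is ensuring that the projection is well defined and that the first-order argument is legitimate. Existence of $\phi$ is handled by noting that $\ZZ$ is closed (as the preimage of $\{0\}$ under the continuous map $\Loss$), so the infimum is attained on any bounded neighborhood of $\theta$; if multiple minimizers exist, the argument applies verbatim to each chosen representative of $\proj(\theta)$. The smoothness of $s$ provided by \cref{def:zero-loss-direction} is exactly what allows the chain rule to produce $g'(0) = -2(\theta-\phi)\tp v$, so no further regularity is required.
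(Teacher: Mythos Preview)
Your proof is correct and follows essentially the same approach as the paper: both use a smooth curve in $\ZZ$ through $\proj(\theta)$ with tangent $v$ together with the minimality of the distance at the projection. The paper phrases it as a proof by contradiction (a nonzero inner product would let one move closer to $\theta$ along the curve), whereas you write out the equivalent direct first-order condition $g'(0)=0$; the underlying argument is identical, and your version is slightly more explicit.
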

\begin{proof}
    We give a proof by contradiction. Assume that there exists $v \in \TT_{\proj(\theta)}$ such that $v\tp \left( \theta - \proj(\theta) \right) \ne 0$. Then, by \cref{def:zero-loss-direction}, there must exist a smooth trajectory $f : \R \rightarrow \R^d$ with $f(0) = \proj(\theta),\ f'(0)=v,$ and $f(t) \in \ZZ$ for all $t \in R$. By moving $\proj(\theta)$ along this trajectory, we can get closer to $\theta$. However, this should not be possible according to \cref{def:projection}.
\end{proof}

We now establish our main result:
\gradortho*

\begin{proof}

By parameterizing $\theta$ as $$\theta = \pz(\theta) + \| \theta - \pz(\theta) \|
    \, \frac{\theta - \pz(\theta)}{\| \theta - \pz(\theta) \|}$$ we can denote the
quantity of interest as
\[
    \frac{v^\top \, \nabla \mathcal{L}(\theta)}{\| v \| \, \| \nabla \mathcal{L}(\theta) \|}
    =
    g\bigg(\pz(\theta), \; \frac{\theta - \pz(\theta)}{\| \theta - \pz(\theta) \|}, \; \| \theta - \pz(\theta) \|, \; v \bigg)
\]
where
\[
    g(\phi, \alpha, x, v) = \frac{v^\top \, \nabla \mathcal{L}(\phi + x \alpha)}{\| v \| \, \| \nabla \mathcal{L}(\phi + x \alpha) \|}
\]
with $\phi \in \ZZ, \, \alpha \in \NN_\phi,\, x \in R^+,$ and $v \in \TT_{\phi}$. 

To obtain the desired result, it suffices to show that there exists $C > 0$ such that
\[
    g(\phi, \alpha, x, v) < Cx
\]
for all $\phi \in S \cap \ZZ, \, \alpha \in U (\mathcal{N}_{\phi}),\, x>0, \, v \in \TT_{\phi}\ $, where $U(\mathcal{N}_\phi) = \{ \, v \in \, \mathcal{N}_\phi \;|\; \|v\| = 1 \, \}$.

We write the gradient of the loss function around $\phi \in \mathcal{Z}$
using the Taylor expansion:
\[
    \nabla \mathcal{L}(\phi + x \alpha) = x \, \nabla^2 \mathcal{L}(\phi) \, \alpha + x \, h_{\phi,\alpha}(x)
\]
where $h_{\phi,\alpha}(x)$ is a remainder term that vanishes as $x \rightarrow 0$. In other words, there exist constants $M_{\phi, \alpha},\, a_{\phi, \alpha} > 0$ such that
\[
    \|h_{\phi, \alpha} (x)\| < M_{\phi, \alpha}\,x \qquad \forall x,\,x <a_{\phi, \alpha}.
\]

Since $S$ is closed and $\Loss$ is continuous, $S \cap \ZZ$ will also be closed. The set of normal vectors with unit norm at any point is also closed. This implies that the following quantities exist and are positive:
\begin{align*}
    M_{\sup} = \sup_{\substack{
        \phi \,\in\, S \cap \ZZ \\ \alpha \,\in\, U(\NN_{\psi})
    }}{ M_{\phi, \alpha} }
    &&
    a_{\inf} = \inf_{\substack{
        \phi \,\in\, S \cap \ZZ \\ \alpha \,\in\, U(\NN_{\psi})
    }}{ a_{\phi, \alpha} }
\end{align*}

We express $g(\theta, \alpha, x, v)$ as:
\[
    g(\phi, \alpha, x, v) = \frac{x \, v^\top \, \nabla^2 \mathcal{L}(\phi) \, \alpha + x \, v^\top \, h_{\phi,\alpha}(x)}{\| v \| \, \| x \, \nabla^2 \mathcal{L}(\phi) \, \alpha + x \, h_{\phi,\alpha}(x) \|}
    = \frac{v^\top \, \nabla^2 \mathcal{L}(\phi) \, \alpha + v^\top \, h_{\phi,\alpha}(x)}{\| v \| \, \| \nabla^2 \mathcal{L}(\phi) \, \alpha + h_{\phi,\alpha}(x) \|}
\]

Since $v \in \TT_{\phi}$, from \cref{lem:hessian}, we have that $v^\top \, \nabla^2 \mathcal{L}(\phi) \, \alpha = 0$. This gives
\begin{align*}
    g(\phi, \alpha, x, v)
    &= \frac{v^\top \, h_{\phi,\alpha}(x)}{\| v \| \, \| \nabla^2 \mathcal{L}(\phi) \, \alpha + h_{\phi,\alpha}(x) \|}\\
    &\le \frac{\|h_{\phi,\alpha}(x)\|}{ \| \nabla^2 \mathcal{L}(\phi) \, \alpha + h_{\phi,\alpha}(x) \|}
\end{align*}

Since $\nabla^2 \mathcal{L}(\phi) \, \alpha \ne 0$, the following also exists and is positive:
\[
\lambda_{\inf} = \inf_{\substack{
    \phi \,\in\, S \cap \ZZ \\ \alpha \,\in\, U(\NN_{\psi})
}}{ \| \nabla^2 \mathcal{L}(\phi) \, \alpha \| }
\]

Assuming that $x < \lambda_{\inf} / M_{\sup}$ guarantees that $\| \nabla^2 \mathcal{L}(\phi) \, \alpha \| > \| h_{\phi,\alpha}(x) \|$, which gives
\begin{align*}
    g(\phi, \alpha, x, v)
    &\le \frac{\|h_{\phi,\alpha}(x)\|}{ \| \nabla^2 \mathcal{L}(\phi) \, \alpha \| - \| h_{\phi,\alpha}(x) \|}\\
    &\le \frac{M_{\sup} \, x}{ \lambda_{\inf} - M_{\sup} \, x}
\end{align*}

for any $x < a_{\inf}$. An appropriate choice of $C$ gives the desired bound for all $\theta$ with $\mathrm{dist}_{\mathcal{Z}} \theta < x_0$ for some $x_0 > 0$, for example:
\begin{align*}
    x_0 = \min\left( a_{\inf},\, \frac{\lambda_{\inf}}{2\,M_{\sup}} \right)
    \qquad
    C=\frac{2 \, M_{\sup}}{\lambda_{\inf}}
\end{align*}

Since the LHS of \cref{thm:weight-norm} is bounded by 1, points with $\mathrm{dist}_{\mathcal{Z}} \theta > x_0$ will always satisfy the bound with $C=1/x_0$. Therefore, we can absorb $x_0$ into $C$ as
\begin{equation}
    C = \max\left( \frac{1}{x_0}, \frac{2 \, M_{\sup}}{\lambda_{\inf}}\right).
\end{equation}

\end{proof}


\newpage
\section{Experimental Details for \cref{sec:real-dynamics} (Similarity of Dynamics)}
\label{app:real-dynamics}

We empirically validate our theory that post-memorization dynamics follows the norm-minimization direction on the zero-loss manifold. We train a few small networks to perform modular addition using two layer, ReLU activation, mean squared-error loss, and weight decay.

\paragraph{Setup.} We use the same architecture and dataset as in \cref{sec:experiments}. We train two-layer networks to perform modular addition with two numbers with a fixed modulo $n$. The network is given two numbers $a,b \in \{1, \ldots,N\}$ and must output their sum. The network has $n$ input neurons, $h$ hidden neurons, and $n$ output neurons. ReLU activation is applied on the hidden layer. The input is the sum of the one-hot vector representations of $a$ and $b$. The target output is the one-hot vector representation of $a + b \mod n$. See \cref{sec:experiments} for more details.

\paragraph{Training Details.} In order to accelerate the calculation of the Jacobian matrix, we use very small networks with $n=11$ and $h=128$. There are $n(n+1)/2=77$ pairs of numbers in the data set. We use 7 of them as a test set and the others as training data. We train five networks with full-batch gradient descent learning rate $\eta = 1$, weight decay $\lambda = 10^{-4}$, and no momentum. We don't use biases and we initialize weight matrices using the standard PyTroch initialization.

\paragraph{Parameter Update Estimation.} We estimate the parameter update as $\Delta\theta_n = \theta_n -\theta_{\max(0, n-c)}$ with $c=10$. This averaging over $c$ update steps gives a better estimation of the direction of the training trajectory by canceling the noisy high-frequency oscillations that might be contained in a single update step.

\paragraph{Norm-Minimizing Direction Estimation.} We aim to estimate the direction that minimizes the norm constrained to the zero-loss set around $\theta_t$. Formally, we are interested in
\begin{equation}
    \tilde{g}_t = \arg \min_v (v\tp\theta_t) \qquad \text{s.t. } v \in \TT_{\proj (\theta_t)} \text{ and } \| v \| = 1
\end{equation}
where $\TT_{\proj (\theta_t)}$ is the tangent space of the zero-loss manifold $\ZZ$ at the closest point to $\theta_t$. In order to estimate this, first we must estimate the closest zero-loss point to $\theta_t$, namely $\proj (\theta_t)$. We estimate this using 100 steps of gradient descent from $\theta_t$ without weight decay $\lambda =0$, same learning rate $\eta=1$, and momentum $\beta = 0.9$ to accelerate convergence. This results in a new point $\theta'_t$ with very low loss. We assume that this point is on the zero loss and we project the parameters on its level set. We achieve this by computing the Jacobian matrix of $\mathcal{F}$ at $\theta'$ and projecting onto its null space:
\begin{equation}
    \tilde{g}_t \approx (I-J^+J)(-\theta_t)
\end{equation}
where $J = \nabla\Loss(\theta_t) \in \R^{km\times d}$ is the Jacobian matrix at $\theta'$ and $J^+=(J\tp J)^{-1}J\tp$ is the Moore–Penrose pseudo-inverse.

\paragraph{Plotting.} We train 5 different networks from random initializations and we plot their statistics in \cref{fig:real-dynamics}. We plot the statistics of individual networks using transparent lines, and we plot the averaged statistics per training step using opaque lines. In \cref{fig:real-dynamics}, we plot the loss rather than the percentage accuracy, since the loss provides a smoother measure of progress in our setup with very few test samples, but we also plot the accuracies in \cref{fig:real-dynamics-acc} for completeness.

\begin{figure}[ht]
    \begin{align*}
        \includegraphics[width=0.99\columnwidth]{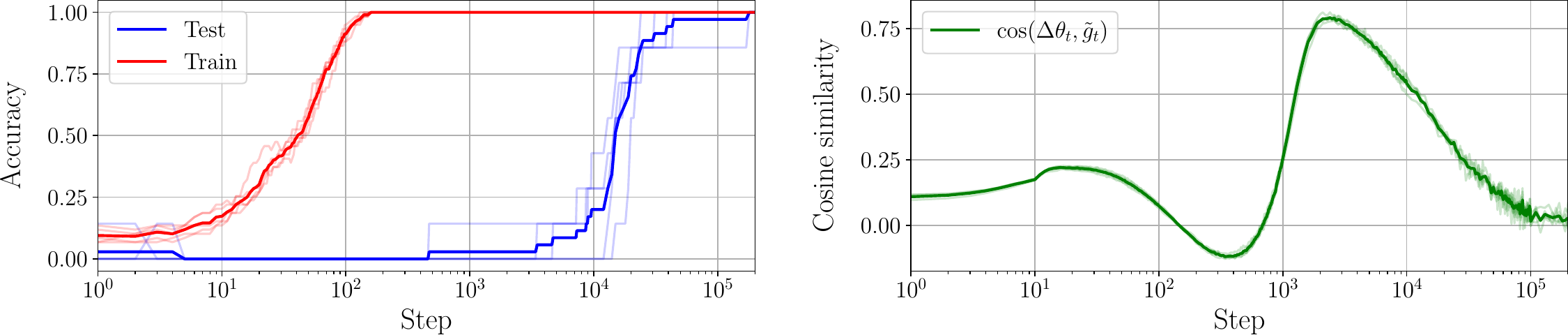}
    \end{align*}
    \caption{Same as \cref{fig:real-dynamics}, but with percentage accuracy rather than loss plotted in the left figure. }
    \label{fig:real-dynamics-acc}
\end{figure}

\newpage
\section{Cost Function Gradient in Two-Layer Networks}
\label{app:two-layer}

We want to derive the gradient of the following cost function:
\[
    \mathcal{R}(W_1)
    \;=\;
    \lambda\,\|W_1\|_F^2
    \;+\;
    \lambda\,\bigl\|\phi(W_1)\bigr\|_F^2
\]
where
\[
    \phi(W_1) = H^\top (H H^\top)^{-1} Y
    \qquad
    H =\sigma(X W_1)
    \qquad
    X \in \mathbb{R}^{n \times d_{\mathrm{in}}}
    \qquad
    W_1 \in \mathbb{R}^{d_{\mathrm{in}} \times d_h}
    \qquad
    Y \in \mathbb{R}^{n \times d_{\mathrm{out}}}
\]

The activation function \(\sigma : \mathbb{R} \rightarrow \mathbb{R}\) is
applied is applied elementwise. Note that, since we work in the zero-loss
approximation, there is no explicit error term.

We decompose \(\mathcal{R}(W_1)\) into two terms:
\[
    \mathcal{R}(W_1)
    \;=\;
    \underbrace{\lambda\,\|W_1\|_F^2}_{\text{Term 1}}
    \;+\;
    \underbrace{\lambda\,\|\phi(W_1)\|_F^2}_{\text{Term 2}}
\]

Since the first term is a standard Frobenius norm squared, its gradient is:
\[
    \nabla_{W_1}\,\Bigl[\lambda\,\|W_1\|_F^2\Bigr]
    \;=\;
    2\,\lambda\,W_1
\]

To find the gradient of the second term, we analyze:
\[
    f(W_1)
    =
    \|\phi(W_1)\|_F^2
    =
    \|H^\top (H H^\top)^{-1} Y\|_F^2
\]

Since the Frobenius norm squared satisfies \(\|M\|_F^2 = \Tr(M^\top M)\), we
write:
\begin{align*}
    f(W_1)
     & =
    \Tr\bigl(\bigl(H^\top (H H^\top)^{-1} Y\bigr)^\top H^\top (H H^\top)^{-1} Y\bigr)
    \\
     & =
    \Tr\bigl(Y^\top (H H^\top)^{-1} H H^\top (H H^\top)^{-1} Y\bigr)
    \\
     & =
    \Tr\bigl( Y Y^\top (H H^\top)^{-1}\bigr)
\end{align*}

Using the following known result from matrix calculus:
\[
    \frac{\partial \Tr(MP^{-1})}{\partial P} = -P^{-1} M P^{-1}
\]
with \(M = Y Y^\top\) and \(P = H H^\top\), we obtain:
\[
    \frac{\partial f}{\partial H}
    =
    -2 (H H^\top)^{-1} Y Y^\top (H H^\top)^{-1} H
\]

Propagating the gradient using the chain rule, we get:

\[
    \frac{\partial f}{\partial W_1}
    =
    -2 X^\top \biggl[ \Bigl((H H^\top)^{-1} Y Y^\top (H H^\top)^{-1} H \Bigr) \odot \sigma'(X W_1) \biggr]
\]

where \(\odot\) denotes the Hadamard product.

Finally, multiplying by \(\lambda\), we obtain the gradient of the second term:
\[
    \nabla_{W_1}\,\Bigl[\lambda\,\|\phi(W_1)\|_F^2\Bigr]
    \;=\;
    -2 \lambda\,X^\top \biggl[ ( A Y Y^\top A H
        ) \odot \sigma'(X W_1) \biggr]
\]
where \(A = (H H^\top)^{-1}\).

Thus, the final expression is:
\[
    \nabla \mathcal{R}(W_1)
    \;=\;
    -2 \lambda W_1
    \;+\;
    2 \lambda X^\top
    \Bigl[
        (A Y Y^\top A H) \odot \sigma'(X W_1)
        \Bigr]
\]

\end{document}